\newcommand{\qed}{\hfill \ensuremath{\Box}}
\newtheorem{thmx}{Theorem}
\newcommand{\pluseq}{\mathrel{+}=}
\newcommand{\minuseq}{\mathrel{-}=}
\newenvironment{proof}{{\bf Proof:}}{\qed}
\tikzset{main node/.style={circle,fill=blue!20,draw,minimum size=1cm,inner sep=0pt},
            }
\title{Learning Graph Node Embeddings by Smooth Pair Sampling}
\author{
Konstantin Kutzkov \\ \normalsize kutzkov@gmail.com
}
\date{}
\begin{document}

\maketitle

\begin{abstract}
Random walk-based node embedding algorithms have attracted a lot of attention due to their scalability and ease of implementation. Previous research has focused on different walk strategies, optimization objectives, and embedding learning models.
Inspired by observations on real data, we take a different approach and propose a new regularization technique. More precisely, the frequencies of node pairs generated by the skip-gram model on random walk node sequences follow a highly skewed distribution which causes learning to be dominated by a fraction of the pairs. We address the issue by designing an efficient sampling procedure that generates node pairs according to their {\em smoothed frequency}. Theoretical and experimental results demonstrate the advantages of our approach.
\end{abstract}

\section{Introduction}
\label{intro}

Representation learning from graphs has been an active research area over the past decade. DeepWalk~\cite{deepwalk}, one of the pioneering approaches in this field, learns node embeddings by generating random walks on the graph. A standard and highly efficient method for optimizing the embedding objective is to train a binary classification model that distinguishes between positive and negative node pairs, known as the negative sampling approach. Positive pairs are generated by applying the skip-gram model~\cite{word2vec} to the node sequences and represent nodes whose embeddings should be similar, in contrast to negative pairs.

Many works have since extended the original DeepWalk algorithm in three main directions: i) presenting different (random) walk strategies for traversing the graph and generating node sequences~\cite{el_haija,metapath2vec,node2vec,walklets,diff2vec,line,attri2vec,app}, ii) designing new embedding learning models~\cite{el_haija,watch_your_step,graphsage,covar_loss,matfac}, and iii) designing new techniques for negative pair sampling~\cite{robust_neg_sampling,res2vec,adaptive_neg_sampling,neg_smooth}. Inspired by observations on real graphs, we take a different approach and propose a general regularization technique that adjusts the frequency distribution of positive node pairs.

In the standard negative sampling setting, when a positive pair $u, v$ is generated, we also sample $k \ge 1$ negative pairs $u, x$, where the node $x$ is selected at random from some distribution on the graph nodes. Usually, this is the $\alpha$-smoothed node degree distribution $\frac{d(u)^\alpha}{\sum_{u \in V} d(u)^\alpha}$, where $d(u)$ is the degree of node $u$ and $\alpha \in (0, 1]$ is a hyperparameter. The pairs are then provided as input to a binary classification model that learns the node embeddings.

We propose to apply frequency smoothing to positive pairs. If a pair $u, v$ occurs $\#(u,v)$ times in the random walk corpus $D$, after smoothing it will be provided as a positive example to the classification model approximately $T_\beta \#(u, v)^\beta$ times, where $T_\beta$ increases with decreasing $\beta \in (0, 1]$ and $\sum_{u,v \in D}\#(u,v)=\sum_{u,v \in D}T_\beta \#(u,v)^\beta$.
%
\begin{figure}[t]
\centering
\includegraphics[scale=0.4]{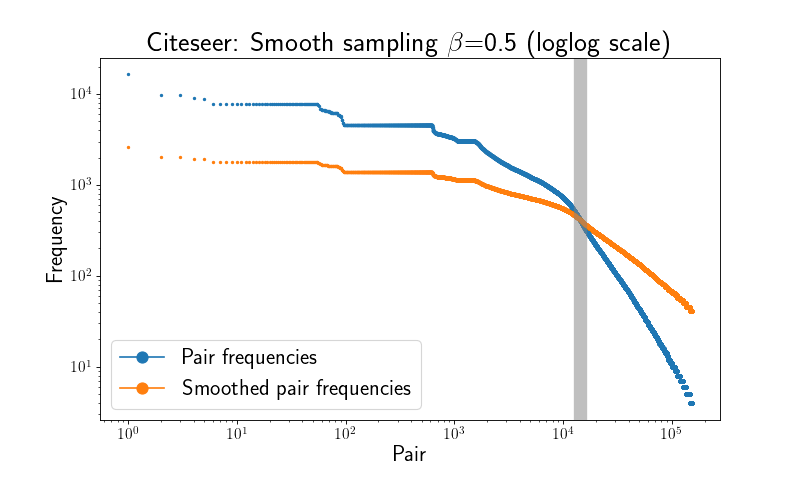}
\caption{The effect of smoothing the pair frequencies for the Citeseer graph. The gray vertical line shows the transition point after which smoothing leads to more positive samples for the corresponding pairs.}
\label{fig:smoothing_effect}
\end{figure}
\\\\
{\bf Motivation and paper contribution.}
Before introducing yet another hyperparameter like the smoothing exponent $\beta$ we must consider the following questions:
\begin{itemize}[leftmargin=*]     
\item {\em Are there any insights suggesting that frequency smoothing might be beneficial?}  We present theoretical results in Section~\ref{sec:why}, but at a high level, we argue that smoothing enhances the robustness of the learning process. Random walk-based embeddings inherently assume that the most frequent positive pairs are the most important, and that embeddings should preserve their similarity. However, in real graphs, the frequency distribution is highly skewed. Figure~\ref{fig:smoothing_effect} illustrates how the frequency distribution of positive pairs generated by DeepWalk changes after applying smoothing with $\beta = 0.5$ for the Citeseer graph. We observe that smoothing decreases the frequency for fewer than 5\% of pairs (note the log scale). The smoothing hyperparameter $\beta \in (0,1]$ acts as a regularizer, applying a “progressive tax” on pair frequencies. While the “rich” pairs remain dominant, we allow the emergence of “middle-class” pairs. Consequently, smoothing extends the set of node pairs whose similarities are preserved by the embeddings. 
\item {\em Can frequency smoothing be efficiently implemented?} Applying the approach used for negative sampling would be to compute and explicitly store the frequencies of all pairs that appear in the random walk sequences and then sample according to the smoothed frequencies. But as we show in Section~\ref{sec:experiments}, this would be unfeasible for larger graphs because the number of pairs grows superlinearly with the graph size. 
We present a simple and highly efficient technique for smooth pair sampling which can be of independent interest. Our algorithm leverages data sketching techniques which provably ensures its scalability. 
\end{itemize}
{\bf Organization of the paper}
In the next section we define the problem setting.  In Section~\ref{sec:algo} we present SmoothDeepWalk, an algorithm that achieves smooth positive pair sampling by only slightly modifying DeepWalk. 
We then provide a theoretical analysis of the benefits of smoothing for node embeddings in Section~\ref{sec:why} and, to address scalability, present an efficient data sketching approach for large graphs in Section~\ref{sec:rtime}.
Related work is discussed in Section~\ref{sec:related}. Experimental evaluation is presented in Section~\ref{sec:experiments} and the paper is concluded in Section~\ref{sec:concl}.

\section{Preliminaries} \label{sec:prel}

We consider undirected connected graphs $G=(V,E)$ but all presented algorithms also work for directed graphs. Let $n=|V|$. The {\em degree} of a node $u \in V$ is the number of edges $(u, v) \in E$ and is denoted by $d(u)$. The set of {\em neighbors} of node $u$ is $N(u) = \{v \in V: (u, v) \in E\}$. A {\em walk} $w$ of length $\ell$ on $G$ is a sequence of $\ell$ nodes in $G$ such that $(w_i, w_{i+1}) \in E$ for $i \in \{1,2,\ldots, \ell-1\}$. 
Given a set of walks $W$, we define the {\em pair corpus} to be the multiset of {\em positive node pairs} $D = \{(w_i, w_j): \exists w \in W$ such that  $|i-j| \le t, i\neq j\}$ for a user-defined window size $t\ge 1$.

The {\em total} number of pairs is $M = |D|$, and we denote the number of {\em unique} pairs in $D$ by $P$. Further, $M_\beta = \sum_{u, v \in D} \#(u,v)^\beta$ for $\beta \ge 0$, where $\#(u,v)$ is the frequency of $u, v$ in $D$. We set $\#u^{(\beta)} = \sum_{v \in V} \#(u, v)^\beta$. Let the pairs be sorted in descending order by their frequency and denote the $i$-th pair frequency in $D$ as $f_i(D)$. We say that the $i$-th pair has {\em rank} $i$. It holds $M=\sum_{u, v \in D} \#(u,v)=\sum_{i=1}^P f_i(D)$.

The $d$-dimensional {\em vector embedding} of node $u$ is denoted by $\vec{u} \in \mathbb{R}^d$.

At a high level, we propose to replace the original corpus $D$ by a {\em $\beta$-smoothed} corpus $D_\beta$ in which the original frequencies $\#(u,v)$ are smoothed to $T_\beta \#(u,v)^\beta$ with $T_\beta = \lceil M/M_\beta \rceil$.  Observe that $T_\beta \in [1, M/P)$ for $\beta \in (0,1]$ and monotonically increases with decreasing~$\beta$. Also, the cardinality of $D_\beta$ remains $M$. 
\\
{\bf Frequency distribution.}
A common statistical formalization of the skewness in real-life datasets is the assumption of Zipfian distribution with parameter $z \ge 0$. Let $S_z = \sum_{i=1}^P i^{-z}$ and $X_z = M/S_z$. In our setting, we assume the pair frequency for the $i$-th pair is $f_i = \frac{X_z}{i^z}$ and it thus holds $\sum_{i=1}^P \frac{X_z}{i^z} = M$.

{\bf Training objective.}
The embedding algorithm maximizes 
\[J = \sum_{(u,v) \in D}\log \frac{\exp(\vec{u}^T \vec{v})}{\sum_{x \in V} \exp(\vec{u}^T \vec{x})}\,\]

Let $\mu: V \rightarrow (0, 1]$ be a distribution on the graph nodes. The above objective is efficiently approximated by the following alternative objective which uses {\em negative sampling}, i.e., for a positive pair $u,v \in D$ we generate $k\ge 1$ negative pairs $u, x$ from $\mu$: 
\[J = \sum_{(u, v) \in D} \log \sigma(\vec{u}^T \vec{v}) + k \cdot \mathbb{E}_{x\sim \mu} \log \sigma(-\vec{u}^T \vec{x})\,\] where $\sigma(x) = (1+\exp(-x))^{-1}$ is the sigmoid function.
A popular choice for $\mu$ is the smoothed degree distribution $\mu(v) = \frac{d(v)^\alpha}{\sum_{v \in V}d(v)^\alpha}$ for $\alpha \in [0, 1]$.

\section{Smooth Pair Sampling} \label{sec:algo}

\SetInd{0.75em}{0.25em}
\begin{algorithm2e}
\small
\caption{SmoothDeepWalk}\label{alg:main}
\KwInput{graph $G=(V,E)$ on $n$ nodes, random walks per node $s$, walk length $\ell$, window size $t$, embedding dimension $d$, smoothing exponent $\beta \in (0,1]$, number of negative samples $k$}
\KwOutput{Embedding matrix $\Phi \in \mathbb{R}^{n \times d}$}
$pos\_pairs \gets 0$\\
$M = n \cdot s \cdot $ \texttt{number\_pairs\_per\_sequence}($\ell$, $t$) \hspace*{3mm}\Comment{\scriptsize $M$ is the total number of positive pairs}  
\While{$pos\_pairs < M$}{
 {Iterate over nodes $w \in V$ and for each $w$ generate $s$ random walks $T$ of length $\ell$ starting at $w$}\\
 \For{$i = 1 \text{ to } \ell$ \hspace*{4mm}\Comment{\scriptsize skip-gram pair generation}} { 
 	 $u = T[i]$\\
	 $start = \max(0, i-t)$\\
	 $end = \min(\ell, i+t)$\\
	 \For{$j \in [start:i, i+1:end]$}{
	 	$v = T[j]$\\
        Generate $r \in \mathbb{U}[0,1)$ \hspace*{3mm}\Comment{\scriptsize smooth sampling} 
        \If{$r \le \#(u, v)^{\beta-1}$}{
			 $pos\_pairs =pos\_pairs + 1$\\
			 Sample $k$ negative pairs $(v,v_N)$ \\
			 Feed the $k+1$ pairs and labels into $\mathcal{M}(\Phi)$
	 }
 	}
  }
}
\Return $\Phi$
\end{algorithm2e}

We present an algorithm that builds upon the positive pair generation approach used in the original DeepWalk algorithm and smooths the pair frequencies on the fly.
Algorithm~\ref{alg:main} outlines how the proposed SmoothDeepWalk works. The total number of positive pairs $M$ is a function of the number of nodes, the number of walks per node, the walk length and the window size.  We iterate over the graph nodes, from each node we start $s$ random walks, each of length $\ell$, and from each node sequence generate positive node pairs using the skip-gram approach~\cite{word2vec}. For each positive pair we sample $k\ge 1$ negative node pairs that are fed into a binary classification model with an embedding matrix $\Phi$, until we have sampled $M$ positive pairs. The only difference to DeepWalks is in lines 13-14 where we sample a candidate pair $u,v$ with probability $\#(u, v)^{\beta-1}$ for $\beta \in (0, 1]$. Note that for $\beta=1$ we have the standard DeepWalk algorithm. 
In a single pass over the random walk corpus we observe the pair $u,v$ exactly $\#(u, v)$ times, thus we expect to sample it  $\#(u, v)^{\beta}$ times. We also expect $T_\beta = M/M_\beta$ passes. Using that individual samples are independent, we show the following result (proof in Appendix~\ref{sec:app_proofs}): 

\begin{thmx} \label{thm:sampling}
Let $G=(V, E)$ and $D$ be the corresponding random walk corpus. Let $\beta \in (0, 1]$.  The following hold for SmoothDeepWalk:
\begin{itemize}[leftmargin=*]
\item Let $M_\beta \ge c T_\beta(T_\beta + 1)$ for some $c\ge1$.  With probability at least $1-e^{-c}$, SmoothDeepWalk needs between $T_\beta -1$ and $T_\beta +1$ passes over the corpus $D$.
\item Let $S_{u,v}$ be the number of positive samples of pair $u, v$ returned by SmoothDeepWalk. It holds $\mathbb{E}(S_{u,v}) = T_\beta\#(u, v)^\beta$. If $\#(u,v) \ge 1/\varepsilon^2 \log 1/\delta$ for $\varepsilon, \delta \in (0, 1)$, then with probability $1-\delta$ it holds $|S_{u,v} - T_\beta(\#(u,v)^{\beta}| \le \varepsilon \#(u, v)$.
\end{itemize}
\end{thmx}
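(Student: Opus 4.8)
The plan is to model the execution as a sequence of independent \emph{passes} over the fixed corpus $D$. In one pass we encounter each of the $M$ pair-occurrences once and, for an occurrence of the pair $u,v$, accept it as a positive sample with an independent coin of bias $\#(u,v)^{\beta-1}$. Thus within a single pass the number of accepted copies of $u,v$ is $\mathrm{Bin}(\#(u,v),\#(u,v)^{\beta-1})$ with mean $\#(u,v)^\beta$, and the total number of accepts in one pass has mean $\sum_{u,v}\#(u,v)^\beta = M_\beta$. Let $N$ denote the number of passes the algorithm performs, i.e.\ the first pass at which the running count of accepted pairs reaches $M$, and let $Z_\tau$ be the cumulative number of accepts after $\tau$ full passes; since the coins are fresh in every pass, $Z_\tau$ is a sum of $\tau M$ independent Bernoulli variables with $\mathbb{E}[Z_\tau]=\tau M_\beta$.

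\textbf{First bullet (number of passes).} I would rewrite the two inequalities as tail events for $Z_\tau$: we have $N\le T_\beta+1$ exactly when $Z_{T_\beta+1}\ge M$, and $N\ge T_\beta-1$ exactly when $Z_{T_\beta-2}<M$. Using $M\le T_\beta M_\beta$, the target $M$ sits at least $M_\beta$ below $\mathbb{E}[Z_{T_\beta+1}]=(T_\beta+1)M_\beta$ and at least $2M_\beta$ above $\mathbb{E}[Z_{T_\beta-2}]=(T_\beta-2)M_\beta$. I would then apply the multiplicative Chernoff bound to each: the lower tail $\Pr[Z_{T_\beta+1}<M]$ and the upper tail $\Pr[Z_{T_\beta-2}\ge M]$. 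The lower tail, with relative deviation $1/(T_\beta+1)$, contributes an exponent of order $M_\beta/(T_\beta+1)$, which the hypothesis $M_\beta\ge cT_\beta(T_\beta+1)$ makes of order $c\,T_\beta$, comfortably exceeding $c$; the upper tail is even smaller. A union bound over the two events then yields failure probability at most $e^{-c}$. (The lower-tail event --- stopping too late --- is the binding one, and the small-$T_\beta$ boundary cases $T_\beta\le 2$, where the lower bound on passes is vacuous, I would dispatch separately.)

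\textbf{Second bullet (per-pair count).} For the expectation, I would view $S_{u,v}=\sum_{p=1}^{N}B_p$, where $B_p\sim\mathrm{Bin}(\#(u,v),\#(u,v)^{\beta-1})$ are i.i.d.\ across passes with $\mathbb{E}[B_p]=\#(u,v)^\beta$; treating the algorithm as performing exactly $T_\beta=\lceil M/M_\beta\rceil$ passes, linearity of expectation gives $\mathbb{E}[S_{u,v}]=T_\beta\#(u,v)^\beta$. For concentration, I would regard $S_{u,v}$ as a sum of $T_\beta\#(u,v)$ independent $\{0,1\}$ trials and apply a Hoeffding/Chernoff bound with additive deviation $\varepsilon\#(u,v)$. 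The resulting exponent scales like $\varepsilon^2\#(u,v)$, so the hypothesis $\#(u,v)\ge \varepsilon^{-2}\log(1/\delta)$ drives it above $\log(1/\delta)$ and bounds the tail by $\delta$, giving $|S_{u,v}-T_\beta\#(u,v)^\beta|\le\varepsilon\#(u,v)$ with probability $1-\delta$.

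\textbf{Main obstacle.} The delicate point is that $N$ is a random stopping time determined by the global accept count, hence correlated with the very variables $B_p$ that make up $S_{u,v}$; strictly speaking $S_{u,v}$ is a sum over a \emph{random} number of passes, so it cannot be treated verbatim as a fixed-length sum of independent Bernoullis, and $\mathbb{E}[S_{u,v}]=T_\beta\#(u,v)^\beta$ is exact only for a fixed pass budget. I would resolve this by conditioning on the high-probability event of the first bullet that $N\in\{T_\beta-1,T_\beta,T_\beta+1\}$ (equivalently, analyzing the idealized variant that runs exactly $T_\beta$ passes): this decouples the count from the stopping rule and adds only an $e^{-c}$ term to the failure probability. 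A secondary check is simply matching the deviation scale --- the error is measured against $\#(u,v)$ while the mean is $T_\beta\#(u,v)^\beta$ --- so that the correct additive Chernoff form is used when converting the hypothesis on $\#(u,v)$ into the claimed $1-\delta$ guarantee.
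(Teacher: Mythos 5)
Your treatment of the first bullet is essentially the paper's own proof: fix a deterministic number of passes, view the cumulative accept count as a sum of independent Bernoulli variables, apply a Chernoff/Hoeffding tail bound to each side, and finish with a union bound, with the hypothesis $M_\beta \ge cT_\beta(T_\beta+1)$ entering in exactly the same way. (One minor slip: $M \le T_\beta M_\beta$ gives an \emph{upper} bound on the gap $M-(T_\beta-2)M_\beta$, not a lower bound; what you actually have is $M > (T_\beta-1)M_\beta$, so the gap exceeds $M_\beta$, which is all the argument needs.) Your explicit handling of the stopping-time correlation in the second bullet --- conditioning on the event of the first bullet, or analyzing the idealized run of exactly $T_\beta$ passes --- is in fact more careful than the paper, which simply assumes the number of passes is concentrated around $T_\beta$.

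There is, however, a genuine quantitative gap in your second bullet. If you regard $S_{u,v}$ as a sum of $T_\beta\#(u,v)$ independent $\{0,1\}$ trials and ask for additive deviation $t=\varepsilon\#(u,v)$, Hoeffding gives exponent $2t^2/(T_\beta\#(u,v)) = 2\varepsilon^2\#(u,v)/T_\beta$, \emph{not} order $\varepsilon^2\#(u,v)$ as you claim: the number of trials is $T_\beta\#(u,v)$, not $\#(u,v)$. Consequently the hypothesis $\#(u,v)\ge \varepsilon^{-2}\log(1/\delta)$ bounds the tail by $\delta$ only if $T_\beta$ is treated as an absolute constant (the paper argues this informally but never builds it into the theorem), and variance-based bounds (Bernstein, multiplicative Chernoff) carry the same $1/T_\beta$ factor and do not rescue the claim. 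The paper's proof avoids the issue by working \emph{per pass}: with $\#(u,v)$ i.i.d.\ trials in a single pass the exponent is genuinely $2\varepsilon^2\#(u,v)$, but the price is that the guaranteed total deviation is $\pm\varepsilon\#(u,v)$ per pass, i.e.\ $T_\beta\varepsilon\#(u,v)$ overall --- and indeed the appendix states the theorem in this weaker form, $S_{u,v}=T_\beta\bigl(\#(u,v)^\beta\pm\varepsilon\#(u,v)\bigr)$. So the ``secondary check'' you flagged was exactly the right worry, but your resolution does not close it: you must either inflate the deviation (to $T_\beta\varepsilon\#(u,v)$ as the paper effectively does, or to $O(\sqrt{T_\beta}\,\varepsilon\#(u,v))$ via the full-sum Hoeffding), strengthen the hypothesis to $\#(u,v)\ge T_\beta\varepsilon^{-2}\log(1/\delta)$, or state explicitly that $T_\beta=O(1)$.
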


The theorem shows that SmoothDeepWalk accurately smooths the pair frequencies and despite the random sampling step, the running time of the algorithm is predictable and almost certain. 
For any reasonable choice of the hyperparameters $\ell, t, s$ and $\beta$, $T_\beta = M/M_\beta$ is a constant and the condition $M_\beta \ge c T_\beta(T_\beta + 1)$ is satisfied for a large value of $c$.
%

\section{How does smoothing help?} \label{sec:why}
As already discussed, for a smoothing parameter $\beta \in (0, 1]$ the pair frequencies are transformed to $T_\beta \#(u,v)^\beta$. The cardinality of the most frequent pairs thus decreases, and of less frequent pairs increases. The vertical gray line in Figure~\ref{fig:smoothing_effect} shows the transition point.  Under the assumption that pair frequencies follow Zipfian distribution with parameter $z$, we first analyze what is the pair rank when this transition occurs depending on $z$ and $\beta$, i.e., how the data skew and the smoothing level affect the location of the gray line.

\begin{thmx} \label{thm:smooth_transition}
Let $D$ be a corpus of cardinality $M$ of node pairs. Let the frequencies in $D$ follow a Zipfian distribution with parameter $z \ge 0$, and $D_\beta$  be the $\beta$-smoothed corpus for $\beta \in (0, 1]$. Let $j$ be the minimum pair rank such that $f({D_\beta})_j > f({D})_j$. 
\begin{itemize}[leftmargin=*]
\item If $z>1$ and $\beta z > 1$, then $j = c(z, \beta)$ for some constant $c$. 
\item If $z>1$ and $\beta z < 1$, then $j = P^{\frac{1-\beta z}{z -\beta z}} = o(P)$. 
\item If $z<1$, then $j = O(P)$. 
\end{itemize}
\end{thmx}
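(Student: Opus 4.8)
The plan is to reduce the question to a single monotone inequality and then read off the three regimes from the asymptotics of generalized harmonic sums. First I would write out both frequency profiles explicitly. Under the Zipfian assumption $f_i(D) = X_z/i^z$, and since smoothing maps $\#(u,v) \mapsto T_\beta\#(u,v)^\beta$, the smoothed profile is $f_i(D_\beta) = T_\beta (X_z/i^z)^\beta = T_\beta X_z^\beta\, i^{-z\beta}$. The transition rank $j$ is the smallest $i$ with $f_i(D_\beta) > f_i(D)$. Because the ratio
\[
\frac{f_i(D_\beta)}{f_i(D)} = T_\beta X_z^{\beta-1}\, i^{z(1-\beta)}
\]
is nondecreasing in $i$ (as $z(1-\beta)\ge 0$), this smallest index is governed by the continuous solution of the equality $T_\beta X_z^{\beta-1} i^{z(1-\beta)} = 1$, i.e. $j = \bigl(X_z^{1-\beta}/T_\beta\bigr)^{1/(z(1-\beta))}$.

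The key simplification comes from substituting $T_\beta$. Writing $M = X_z S_z$ and $M_\beta = \sum_i f_i(D)^\beta = X_z^\beta S_{z\beta}$, we get $T_\beta = M/M_\beta = X_z^{1-\beta} S_z/S_{z\beta}$, where I ignore the harmless ceiling in $T_\beta = \lceil M/M_\beta\rceil$. Plugging this in, the $X_z$ factors cancel exactly and the transition rank collapses to
\[
j = \left(\frac{S_{z\beta}}{S_z}\right)^{\frac{1}{z(1-\beta)}}.
\]
This is the heart of the argument: the location of the transition point depends only on the ratio of the two generalized harmonic sums with exponents $z\beta$ and $z$, and all dependence on the scale $X_z$ (hence on $M$) disappears.

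It then remains to estimate $S_q = \sum_{i=1}^P i^{-q}$ for $q \in \{z, z\beta\}$, which I would do with standard integral bounds: for $q > 1$ the sum converges, so $S_q = \Theta(1)$ (with limit $\zeta(q)$); for $q < 1$ integral comparison gives $S_q = \Theta(P^{1-q})$. The three cases follow by plugging in. If $z > 1$ and $z\beta > 1$, both sums are constants, so $j = (\zeta(z\beta)/\zeta(z))^{1/(z(1-\beta))} = c(z,\beta)$. If $z > 1$ and $z\beta < 1$, then $S_z = \Theta(1)$ while $S_{z\beta} = \Theta(P^{1-z\beta})$, giving $j = \Theta\bigl(P^{(1-z\beta)/(z(1-\beta))}\bigr) = \Theta\bigl(P^{(1-\beta z)/(z-\beta z)}\bigr)$; since $z > 1$ forces this exponent below $1$, we conclude $j = o(P)$. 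If $z < 1$, then $z\beta < 1$ as well, both sums grow, and the exponents combine as $\bigl((1-z\beta)-(1-z)\bigr)/(z(1-\beta)) = 1$, yielding $j = \Theta(P) = O(P)$.

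The main obstacle is less the algebra than the asymptotic bookkeeping. I would need two-sided $\Theta$ estimates, not merely upper bounds, for the divergent harmonic sums in order to pin down the exponent of $P$ exactly, so I would record both the integral lower bound $\int_1^{P+1} x^{-q}\,dx$ and the upper bound $1 + \int_1^{P} x^{-q}\,dx$. I would also confirm that replacing $j$ by $\lceil j\rceil$ and dropping the ceiling in $T_\beta$ perturb the answer only by constant factors, which suffices since the claims are stated up to $\Theta$ (and an exact constant in the first case). The boundary situation $z\beta = 1$ is excluded by hypothesis and need not be treated.
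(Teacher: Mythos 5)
Your proposal is correct and follows essentially the same route as the paper: both reduce the problem to the threshold condition $j^{z(1-\beta)} > S_{\beta z}/S_z$ (your ratio formulation with the $X_z$ cancellation is exactly the paper's derivation, stated slightly more cleanly via monotonicity of $f_i(D_\beta)/f_i(D)$ in $i$), and then both read off the three regimes from the standard integral estimates $S_q = \Theta(1)$ for $q>1$ and $S_q = \Theta(P^{1-q})$ for $q<1$. The only cosmetic difference is that you make explicit the monotonicity argument justifying that the transition rank is the solution of the equality, which the paper leaves implicit.
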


The above result has a very intuitive interpretation.  For highly skewed distributions, a slight decrease in the skew such that $\beta z>1$ would reduce the frequency only for a few of the most frequent pairs. If the decrease is more significant, i.e., $\beta z < 1$ (one can visually interpret this as the ``angle'' between the blue and orange lines in Figure~\ref{fig:smoothing_effect} becoming larger), then we move $j$, i.e., the gray vertical line, to the right. Still, the frequency only for a sublinear number of the pairs decreases. And if the distribution is not very skewed, then we will increase the frequency only for a constant fraction of the pairs.  

We next analyze how smoothing impacts the embeddings of individual nodes. The proof of the next theorem follows from~\cite{word2vec_expl} and can be found in  Appendix~\ref{sec:app_proofs}. 

\begin{thmx} \label{thm:matfac}
Let $D_\beta$ be $\beta$-smoothed corpus of cardinality $M_\beta$ for $\beta \in (0, 1]$. Let $\mu: V \rightarrow [0, 1]$ be the negative sampling probability distribution.  For all node pairs $u, v \in V$, SmoothDeepWalk optimizes the objective $\vec{u}^T \vec{v} = \log \frac{\#(u, v)^\beta}{\#u^{(\beta)} \mu(v)} - \log k$.
\end{thmx}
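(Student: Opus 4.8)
The plan is to adapt the implicit matrix factorization argument of Levy and Goldberg~\cite{word2vec_expl} to the smoothed corpus $D_\beta$. Following their derivation, I treat each inner product $x := \vec{u}^T\vec{v}$ as an independent free parameter (the standard caveat being that the embedding dimension $d$ is large enough that these values are not mutually constrained), and optimize the negative-sampling objective term by term. The first step is to rewrite the global objective as a sum over distinct node pairs weighted by their multiplicities in $D_\beta$, which by construction are the smoothed counts $T_\beta\#(u,v)^\beta$, and then collect exactly those summands in which the parameter $x$ for a fixed pair $(u,v)$ appears.

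The second step is to account for the two ways $x$ enters. As a positive example, the pair $(u,v)$ occurs $T_\beta\#(u,v)^\beta$ times in $D_\beta$, contributing $T_\beta\#(u,v)^\beta\log\sigma(x)$. As a negative example, $x$ appears whenever $u$ is the target node of some positive pair and the sampler happens to draw $v$ as the negative from $\mu$. The number of positive pairs with target $u$ in $D_\beta$ is $\sum_{w}T_\beta\#(u,w)^\beta = T_\beta\#u^{(\beta)}$; each such pair spawns $k$ negative draws, and the expected number of those that land on $v$ is $\mu(v)$ times this count. Hence the negative contribution is $k\,T_\beta\#u^{(\beta)}\mu(v)\log\sigma(-x)$. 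The key obstacle, and the place where smoothing enters nontrivially, is precisely this bookkeeping: the node multiplicity $\#(u)$ of the unsmoothed analysis must be replaced by $\#u^{(\beta)}=\sum_{v}\#(u,v)^\beta$, because negatives are generated on a per-positive-pair basis and the positive pairs have themselves been reweighted by the exponent $\beta$.

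The final step is elementary calculus. Collecting the two terms gives the local objective $\ell(x) = T_\beta\#(u,v)^\beta\log\sigma(x) + k\,T_\beta\#u^{(\beta)}\mu(v)\log\sigma(-x)$; differentiating with $\tfrac{d}{dx}\log\sigma(x)=\sigma(-x)$ and $\tfrac{d}{dx}\log\sigma(-x)=-\sigma(x)$ and setting $\ell'(x)=0$ yields $\#(u,v)^\beta\sigma(-x)=k\,\#u^{(\beta)}\mu(v)\sigma(x)$, where the common factor $T_\beta$ cancels, explaining its absence from the final expression. Using the identity $\sigma(-x)/\sigma(x)=e^{-x}$ and solving for $x$ gives $x=\log\frac{\#(u,v)^\beta}{\#u^{(\beta)}\mu(v)}-\log k$, which is the claimed identity. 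I would note in passing that this requires $\mu(v)>0$ for the logarithm to be defined, which is consistent with the distribution used for negative sampling.
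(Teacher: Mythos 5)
Your proposal is correct and takes essentially the same approach as the paper's own proof: the Levy--Goldberg argument applied to the smoothed corpus, collecting for each fixed pair the positive term $T_\beta\#(u,v)^\beta\log\sigma(\vec{u}^T\vec{v})$ and the negative term $k\,T_\beta\#u^{(\beta)}\mu(v)\log\sigma(-\vec{u}^T\vec{v})$, then setting the derivative in $y=\vec{u}^T\vec{v}$ to zero so that $T_\beta$ cancels. If anything, your calculus step is stated more cleanly than the paper's, whose displayed expression after differentiation is actually the derivative (and misprints $\#(u,v)$ for $\#(u,v)^\beta$).
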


Recall that $\#u^{(\beta)} = \sum_{w \in V} \#(u, w)^\beta$. The negative sampling distribution $\mu(v)$ is independent from $u$. We analyze the expression $\frac{\#u^{(\beta)}}{\#(u, v)^\beta}$ under the assumption that for the $n$ pairs $u, w$ with $w \in V$, the frequencies follow a Zipfian distribution with parameter $z>1$. Let the rank of the pair $u, v$ be $i$, and its frequency be $O(1/i^z)$. We have $\frac{\#u^{(\beta)}}{\#(u, v)^\beta} = \sum_{j=1}^n (\frac{i}{j})^{\beta z}$. We distinguish the following cases: 
\begin{itemize}[leftmargin=*]
\item[-] 
If $i$ is one of the top ranks among the $u, w$ pairs, then we have $\frac{\#u}{\#(u, v)} = O(1)$ for $z > 1$.  The objective then optimizes $O(\mu(v)^{-1})$. By smoothing, we can bound $\frac{\#u^{(\beta)}}{\#(u, v)^\beta} = O(n^{1-\beta z})$ for $\beta z < 1$ which regularizes the optimization objective for high-frequency pairs from $O(1/\mu(v))$ to $O(\frac{n^{\beta z -1}}{\mu(v)})$.
\item[-] 
For $i > k$, we have  $\frac{\#u}{\#(u, v)} > \sum_{j=1}^n (k/j)^z = O(k^z)$. In particular, for $\mu(v) > 1/k^z$ the number of negative samples of $u, v$ will exceed the number of positive samples. (Note that this might explain the observation that hierarchical softmax in word2vec is better for infrequent words than negative sampling~\cite{word2vec}.) 
By smoothing the frequencies such that $\beta z < 1$, we get $ \sum_{j=1}^n (k/j)^{\beta z} = O(k^{\beta z} n^{1-\beta z})$. Thus, if the rank of $u, v$ is $k > n^{\frac{1-\beta z}{z - \beta z}} = o(n)$ it holds $\frac{\#(u, v)^\beta}{\#u^{(\beta)}} > \frac{\#(u, v)}{\#u}$. Thus, the representation for low frequency pairs improves. 
\end{itemize}

In summary, Theorem~\ref{thm:smooth_transition} shows how smoothing affects the overall distribution of positive samples, and Theorem~\ref{thm:matfac} presents a generalized version of DeepWalk's optimization objective that distributes the contribution of individual pairs more evenly. This highlights the motivation behind smoothing as a tool that allows to extend the set of node pairs that are considered important.

\section{Efficient Pair Frequency Estimation} \label{sec:rtime}
The obvious drawback of Algorithm 1 is that we need to know the frequencies $\#(u,v)$ of all $P$ pairs in the corpus, and thus it needs $O(P)$ memory. It holds $P=O(n\cdot \ell \cdot t \cdot s)$ and when  using the default values for the hyperparameters $\ell, t, s$, for the 8 real graphs considered in our experimental evaluation it holds $P=O(n^c)$ for $c > 1.64$. Thus, for larger graphs the amount of available memory on commodity machines will likely be insufficient. Solutions that use external memory like Glove~\cite{glove} can be impractical for various reasons.  
We address the issue by leveraging algorithms for frequent item mining in data streams, a widely studied data mining problem~\cite{hh_mining}. We use a compact summary of the pair frequency distribution that allows us to approximate the frequency of each node pair $(u, v)$. 

We describe how the {\sc Frequent} method~\cite{frequent} detects heavy hitters in two passes over a data stream, pseudocode can be found in Appendix~\ref{sec:app_code}.
{\sc Frequent} maintains a dictionary $S$, the so-called {\em sketch}, that stores up to $b$ distinct pairs together with a counter lower bounding the frequency of each pair.
When a new pair $(u,v)$ is generated, we check whether it is already in $S$. If so, we increment the corresponding counter. Otherwise, we insert $(u,v)$ with a counter set to 1.  If there are $b+1$ pairs in $S$, we decrease by 1 the counter of all those pairs and remove pairs whose counter has become 0, so we guarantee at most $b$ pairs remain in $S$. The last step corresponds to removing $b+1$ distinct pairs from the multiset of positive pairs. The total number of pairs is $M$ and the weight of a pair can be underestimated by at most $M/b$. Thus, all pairs that appear more than $M/b$ times are guaranteed to be in $S$ after processing all pairs.   
We implement $S$ as a hash table and charge the cost of each counter decrement to the cost incurred at its arrival. Thus, the amortized cost per update is constant, and the total running time is $O(M)$. 

In a second pass over the corpus (we assume that the random walks are created by setting a random seed and the corpus is generated on the fly), we compute the exact frequency of all pairs recorded in the sketch. In this way, in SmoothDeepWalk we work with the exact frequencies of the heavy pairs. For pairs not recorded in the sketch, we return an overestimation of the pair frequency as  
$(M - \textsl{total weight of pairs in sketch})/b$. (In Appendix~\ref{sec:app_extensions} we discuss alternative sketching methods.)

The following result shows that for a skewed frequency distribution we need a compact sketch in order to provably detect the most frequent pairs. The proof is based on a result by~\cite{berinde_et_al} and is provided in  Appendix~\ref{sec:app_proofs}.  
\begin{thmx} \label{thm:sketchsize}
Let the pair frequencies follow Zipfian distribution with parameter $z$. For $z>1$, {\sc Frequent} returns the exact frequency of the $k$ most frequent pairs using a sketch of size $b=O(k)$. For $z<1$, the required sketch size is $b=O(k^zP^{1-z})$ where $P$ is the number of unique pairs.
\end{thmx}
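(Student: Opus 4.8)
The plan is to combine the residual error guarantee for {\sc Frequent} due to~\cite{berinde_et_al} with the tail estimates of a Zipfian distribution. Recall that in the two-pass scheme every pair that survives in the sketch $S$ after the first pass has its exact frequency recomputed in the second pass; hence it suffices to show that, with a sketch of the claimed size $b$, all of the $k$ most frequent pairs remain in $S$ after the first pass. The Berinde et al.\ refinement of the Misra--Gries analysis states that the counter $\hat f_i$ maintained for pair $i$ satisfies $f_i - \hat f_i \le \frac{F^{res(k)}}{b-k}$, where $F^{res(k)} = \sum_{i>k} f_i$ is the $L_1$ mass of the tail after discarding the top $k$ frequencies. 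In particular, pair $i$ is guaranteed to be retained (its counter stays positive) whenever $f_i > \frac{F^{res(k)}}{b-k}$. Since $f_k$ is the smallest frequency among the top $k$ pairs, it suffices to choose $b$ so that this inequality holds for $i=k$, i.e.\ $f_k\,(b-k) > F^{res(k)}$.

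First I would substitute the Zipfian frequencies $f_i = X_z / i^z$ and bound the tail sum $F^{res(k)} = X_z \sum_{i>k} i^{-z}$ by the standard integral comparison. For $z>1$ the series converges and $\sum_{i>k} i^{-z} \le \frac{k^{1-z}}{z-1}$, so $F^{res(k)} = O(X_z k^{1-z})$; the retention condition $X_z k^{-z}(b-k) > O(X_z k^{1-z})$ then reduces to $b-k = \Omega(k)$, giving $b = O(k)$. For $z<1$ the series diverges and is dominated by its upper end, $\sum_{i>k} i^{-z} \le \frac{P^{1-z}-k^{1-z}}{1-z} = O(P^{1-z})$, so $F^{res(k)} = O(X_z P^{1-z})$; the condition $X_z k^{-z}(b-k) > O(X_z P^{1-z})$ becomes $b-k = \Omega(k^z P^{1-z})$. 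Since $P \ge k$ and $1-z>0$ we have $k^z P^{1-z} = k\,(P/k)^{1-z} \ge k$, so the additive $k$ is absorbed and $b = O(k^z P^{1-z})$.

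I expect the main technical care---rather than a deep obstacle---to lie in two places. First, in pinning down the correct guarantee from~\cite{berinde_et_al}: one must use the residual (top-$k$-removed) tail bound $F^{res(k)}/(b-k)$ rather than the cruder $M/b$ bound quoted for the basic {\sc Frequent} analysis, since only the former is tight enough to yield $b=O(k)$ for $z>1$; the crude bound would force $b$ to scale with $M$. Second, in verifying that the integral estimates of the Zipfian tail are uniform enough in $k$ and $P$ that the hidden constants do not degenerate near the boundary $z=1$, where the two regimes meet and $S_z = \Theta(\log P)$ rather than $\Theta(1)$ or $\Theta(P^{1-z})$. Handling that logarithmic boundary case explicitly, if desired, is the only slightly delicate point; otherwise the argument is a direct substitution of the Zipfian tail into the retention inequality.
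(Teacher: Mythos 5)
Your proposal is correct and takes essentially the same approach as the paper: both hinge on the Berinde et al.\ residual guarantee (undercount at most $\|\mathbf{f}_{res(k)}\|_1/(b-k+1)$) combined with integral estimates of the Zipfian tail, the only cosmetic difference being that the paper invokes the crude $M/b$ guarantee for the $z<1$ case and reserves the residual argument for $z>1$, whereas you apply the residual bound uniformly, with identical asymptotics. One aside in your commentary is inaccurate---for $z>1$ the crude bound yields $b=O(k^z)$ (as the paper itself notes), not a bound scaling with $M$---but your conclusion that the refined bound is needed to reach $O(k)$ is right, and this does not affect the proof.
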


{\bf Computational complexity.}
The space complexity of Algorithm~1 is $O(n\cdot d + b)$ where $b$ is the number of pairs in the sketch and can be chosen according to the amount of available memory. The time complexity depends on the sampling probability $\#(u, v)^{\beta-1}$. In a single pass over the random walk sequences we expect to sample $M_\beta = \sum_{(u, v) \in \mathcal{G}}\#(u, v)^\beta$ pairs. Thus, the  expected number of passes is $T_\beta = \lceil M/M_\beta \rceil$.
However, the number of pairs  provided as input to the embedding learning algorithm is independent of $\beta$ and training the embeddings is the computationally intensive part. 

\section{Related work} \label{sec:related}

We give an overview of the main node embeddings approaches. The reader is referred to the survey by~\cite{survey_embs} for more details and a deeper discussion. 

Factorization of the graph adjacency matrix is an intuitive and efficient approach as most graphs are sparse and there exist efficient approaches to sparse matrix factorization~\cite{hope,arope}. 
Inspired by word2vec~\cite{word2vec}, DeepWalk~\cite{deepwalk} was the first algorithm that employs random walks and which in turn inspired other  approaches to exploit different (random) walk strategies~\cite{el_haija,metapath2vec,node2vec,walklets,diff2vec,line,attri2vec,app}. DeepWalk uses hierarchical softmax~\cite{word2vec}  for optimizing the embedding objective. Due to its efficiency,  negative sampling has become the standard optimization method and many negative sampling variants have been proposed~\cite{robust_neg_sampling,res2vec,adaptive_neg_sampling,neg_smooth}.
Building upon~\cite{word2vec_expl}, ~\cite{matfac} show that several of the above algorithms can be unified in a common matrix factorization framework and provide a precise definition of the corresponding matrix. A main advantage of word2veec and DeepWalk is that the matrix is never explicitly generated. 


As an alternative, discrete node embeddings represent nodes by vectors consisting of discrete features such as node attributes~\cite{lonesampler,nethash,nodesketch}. Discrete embeddings are compared using their Hamming distance and are interpretable if the entries are human readable features. The embedding models are efficient as there is no need to train a classification model. But they achieve lower accuracy and limit the choice of machine learning algorithms they can be combined with for downstream tasks.

Graph neural network models~\cite{gnn_deep,graphsage,gae} often yield state-of-the-art embedding methods. The models are inductive and can compute embeddings for previously unseen nodes by aggregating the embeddings of their neighbors. In contrast, the above discussed approaches are transductive and need to retrain the model to learn embeddings for new nodes. Also, GNNs naturally incorporate into the learning process node attributes and can be trained in end-to-end classification tasks. This comes at the price of the typical deep learning disadvantages such as scalability, tedious  hyperparameter tuning and the difficulty to obtain rigorous theoretical results like the ones in~\cite{matfac}.

{\bf Smooth negative sampling.}
Probably most relevant to our approach is the work by~\cite{neg_smooth}. 
%
The authors argue for smooth sampling of negative pairs from the positive distribution in order to guarantee that we don't oversample or undersample some negative pairs. Using only limited training data, it is shown that learning needs to trade off the objective of positive sampling and the expected risk that the embeddings deviate from the objective due to a limited number of samples. It is concluded that ``negative sampling distribution should be positively but sublinearly correlated to their positive sampling distribution''.  So instead of generating $\#(u,v)^\beta$ positive samples, we generate $\#(u,v)$ positive and $\#(u,v)^{1-\beta}$ negative samples. 
The big advantage of our approach compared to smooth negative sampling is its efficiency. In order to sample a negative node pair, \cite{neg_smooth} uses a complex MCMC sampling algorithm that requires the evaluation of the inner product for several candidate node pairs which needs time $\Omega(d)$. In contrast, SmoothDeepWalk needs a single hash table lookup per candidate pair and time $O(1)$.

\section{Experimental evaluation} \label{sec:experiments}

In this section, we introduce the graph datasets used for evaluation and discuss their properties. We then present the experimental setup and analyze how the performance of SmoothDeepWalk in link prediction is influenced by the sketch budget $b$ and the smoothing exponent $\beta$. We derive default values for $b$ and $\beta$ depending on the computational complexity and the graph structure. Using these default values, we evaluate the performance of smoothing for node classification, and present a smooth version of node2vec.  The performance of the proposed algorithms is then compared to other node embedding methods.  
%
Note that since the evaluation is extensive, we only present our main findings. We refer to Appendix~\ref{sec:app_experiments} for details and additional results. A prototype Python implementation~\footnote{\url{https://github.com/konstantinkutzkov/smooth_pair_sampling}} is publicly available, see Appendix~\ref{sec:app_data} for details about the implementation.
\paragraph{Datasets.}~\label{sec:data} 
%
We perform experiments on eight publicly available graphs summarized in Table~\ref{tab:datainfo}. The first three datasets Cora, Citeseer and Pubmed~\cite{sen_et_al} are citation networks, Git~\cite{musae}, Deezer~\cite{musae} and BlogCatalog~\cite{blogcatalog} represent different social interactions, Flickr~\cite{flickr} is a network of images connected by certain properties, and the Wikipedia graph~\cite{wikipedia} represents word co-occurrences. The graph nodes are labeled with one of \texttt{\#classes} unique labels, and Wikipedia and BlogCatalog can have more than one label per node.  Following the discussion in Section~\ref{sec:algo}, the column \texttt{\#pairs}  shows the (estimated) number of unique pairs $P$ as a function of the number of nodes $n$. 
The last column shows the graph clustering coefficient $CC$.

\begin{table}[!h]
\caption{Information on datasets.} \label{tab:datainfo}
\hspace*{-1cm}
\scriptsize
\centering
\begin{tabular}{ c  ccccc}
& nodes & edges & classes & \#pairs & $CC$\\
\toprule
Cora & 2.7K & 5.4K & 7  & $n^{1.703}$ & 0.241\\
\midrule
Citeseer & 3.3K & 4.7K & 28 & $n^{1.657}$ & 0.141\\
\midrule
PubMed & 19.7K & 44.3K &  18  & $n^{1.649}$ & 0.06\\
\midrule
Deezer & 28.3K & 92.8K & 2 & $n^{1.772}$ & 0.141 \\
\midrule
Git & 37.7K & 289K & 2 & $n^{1.733}$ & 0.168\\
\midrule 
Flickr & 89.3K & 450K & 7 &  $n^{1.687}$ & 0.033\\
\midrule 
Wikipedia & 4.8K & 92.5K & 40 & $n^{1.927}$ & 0.539\\
\midrule
BlogCatalog & 10.3K & 334K & 39 & $n^{1.861}$ & 0.463\\
\bottomrule 
\end{tabular}
\end{table}

\paragraph{Data skew.} The main motivation of the paper is the assumption that the frequency distribution for positive node pairs is highly skewed. In addition to Figure~\ref{fig:smoothing_effect}, in Appendix~\ref{sec:app_data} we present statistics for the other graphs that confirm the assumption is justified. Also, following the discussion in Section~\ref{sec:why}, we show that smoothing increases the ratio of positive to negative samples for middle-rank pairs. 

\paragraph{Experimental setting.} 
We implement pair smoothing in Python and conduct experiments on a standard laptop. We use a TensorFlow generator for smooth sampling and a shallow neural network for learning the embeddings, with the hyperparameters from~\cite{node2vec}: embedding dimension $d=128$, random walks of length 80, 10 walks per node, a window of size 10, 5 negative samples per positive pair, and negative pair sampling using the smooth node degree distribution with $\alpha=0.75$ discussed in Section~\ref{sec:prel}. Additional details can be found in Appendix~\ref{sec:app_data}.


\paragraph{Computational complexity.}  
Smoothing requires a sketch of the pair frequencies and several passes over the random walk corpus. In Figure~\ref{fig:varying_beta} we show how the training time increases for decreasing $\beta$ and fixed sketch budget $b=10\%$ of all $P$ unique pairs in the corpus. The time also increases by decreasing $b$ as we overestimate the frequency of more pairs and thus decrease the sampling probability. Overall, we observe that for larger graphs the increase in running time is rather small, we refer to Appendix~\ref{sec:app_hyperparameters} for a discussion. It should be noted that our algorithm is orders of magnitude faster than approaches to smooth negative pair sampling~\cite{neg_smooth,adaptive_neg_sampling}.
\begin{figure}[h!]
\centering
\includegraphics[width=60mm, height=40mm]{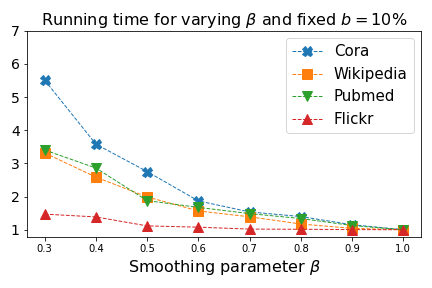}
\caption{Embedding training time (left)}
  \label{fig:train_time}
\end{figure}

\paragraph{Link prediction.} We focus on link prediction in the following analysis because it allows a more clear interpretation of the properties of the learned embeddings. Following the setting proposed in~\cite{hope,nodesketch}, we delete 20\% of the edges at random such that the graph remains connected and train embeddings on the reduced graph. 
We train embeddings for combinations of $\beta \in [0.3, \ldots, 0.9]$ and $b \in [1\%, 5\%, 10\%, 20\%]$ of the estimated value of $P$. For a random sample of node pairs, we select the $k$ pairs with the highest inner products, compute precision@$k$ and recall@$k$, and report the averages over 100 independent runs.

For detailed numerical scores and in-depth analysis, please refer to Appendix~\ref{sec:app_hyperparameters}. In summary, the results for link prediction are as follows:
\begin{itemize}
\item No statistically significant improvements for the small Cora and Citeseer graphs according to a $t$-test at significance level $0.01$.
\item Impressive improvements for the larger, sparse Pubmed, Deezer, Git, and Flickr graphs. 
\item Smaller yet statistically significant improvements for the denser Wikipedia and BlogCatalog.
\item The improvements are consistent for varying values of $k$ for both precision and recall.
\end{itemize} 

\paragraph{Default values for $\beta$ and $b$.} It might appear that one would need careful hyperparameter tuning for the optimal value of $\beta$ but fortunately the scores exhibit a functional dependency on $\beta$ and there is no need for exhaustive search. However, different graphs show different patterns. We show two representative examples in Figure~\ref{fig:varying_beta}. For the Git graph, the lower $\beta$ the stronger the result. For the Wikipedia graph too aggressive smoothing can be harmful. The Wikipedia graph exhibits a clear community structure as shown by its large clustering coefficient. We conjecture that too aggressive smoothing can destroy local communities. In order to test the hypothesis, we generate a scale-free graph with a low clustering coefficient, and then add edges at random among the neighbors of high-degree nodes. As we see in Figure~\ref{fig:synthetic}, smoothing yields improvements for lower values of $\beta$ only for the graph with a low clustering coefficient. 

\begin{figure}[t!]
\centering
\includegraphics[width=60mm, height=40mm]{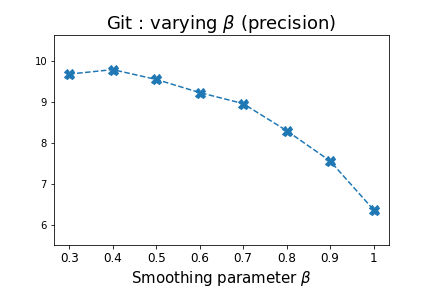}
\includegraphics[width=60mm, height=40mm]{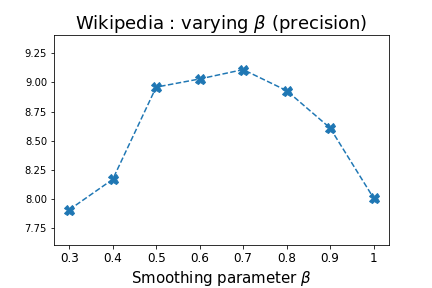}
  \caption{Precision@100 scores for link prediction for varying $\beta$ for the Git and Wikipedia graphs.}
  \label{fig:varying_beta}
\end{figure}

\begin{figure}[h!]
\centering
\includegraphics[width=60mm, height=40mm]{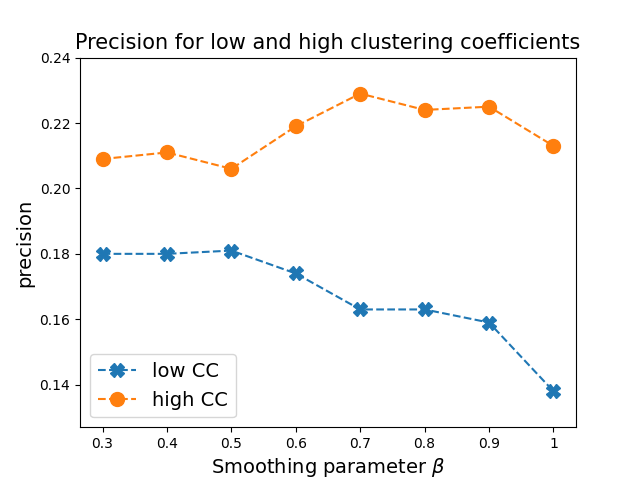}
  \caption{Precision@100 scores for synthetic graphs with low and high clustering coefficients.}
  \label{fig:synthetic}
\end{figure}

We observe the sketch budget $b$ has little influence on the accuracy, see Appendix~\ref{sec:app_hyperparameters} for details,  hence we set $b=10\%$ as it offers a good time-memory trade-off. We analyze all eight graphs in Appendix~\ref{sec:app_hyperparameters} and take into account the increase in running time for lower values of $\beta$ as well as the theoretical results from Section~\ref{sec:why} showing that too aggressive smoothing might lead to embeddings that preserve the similarity of almost all positive pairs. Ultimately, we recommend $\beta \in [0.4, 0.6]$ for graphs with a low clustering coefficient and $\beta \in [0.7, 0.8]$ otherwise. In the next paragraphs we fix $b = 10\%$ and $\beta = 0.5$ for graphs with clustering coefficients below 0.2, and $\beta = 0.75$ otherwise. 

{\bf Node classification.} Using the provided node classes, we train a logistic regression model with the node embeddings as features. For the recommended values for $b$ and $\beta$, we observe modest yet statistically significant improvements for six out of the eight graphs. Additionally, these improvements remain consistent for larger training sizes.  In Appendix~\ref{sec:app_hyperparameters}, we discuss how smoothing enhances the quality of embeddings for low-degree nodes without compromising those of high-degree nodes. An example is shown in Figure~\ref{fig:node_clf_cora} where we see that even if smoothing yields only small overall accuracy gains for Cora when evaluated on all nodes, the representation of nodes of lower degree improves.  

\begin{figure}[h!]
\centering
\includegraphics[width=60mm]{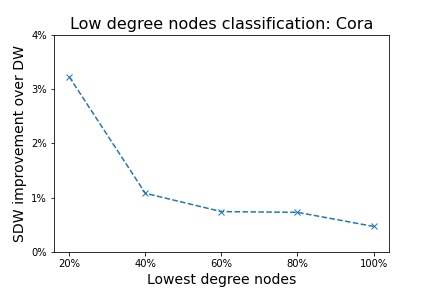}
  \caption{Macro-F1 scores for node classification on Cora for $\beta=0.75$ when evaluating embeddings for low-degree nodes.}
  \label{fig:node_clf_cora}
\end{figure}

{\bf SmoothNode2Vec (SN2V).}
%
%
Many different random walk strategies have been proposed and a natural question is whether some of these algorithms also achieve less skewed pair frequency distribution. We chose node2vec as a representative approach that allows biased random walk strategies with well-understood properties. Even if ``outward-biased'' walk strategies appear to have some smoothing effect, the pair frequency distribution still follows a power law, see appendix~\ref{sec:app_data} for details. 

We evaluate node2vec and its smooth version for combinations of the hyperparameters $p$ and $q$ from \{0.25, 0.5, 2, 4\}.  
For SmoothNode2Vec we again observe impressive improvements for link prediction for Pubmed, Git, Deezer and Flickr, independent of $p$ and $q$.  For node classification, smoothing considerably improves upon node2vec when  node2vec performs poorly otherwise the improvements are very small yet statistically significant.
All results can be found in Appendix~\ref{sec:app_n2v}.
%
%

{\bf Comparison with other embedding methods.}
We compare the performance of SmoothDeepWalk (SDW) and SmoothNode2Vec (SN2V) to other approaches using the recommended default values for $b=10\%$ and $\beta=0.5$ if the graph's clustering coefficient is below 0.2, otherwise we set $\beta=0.75$. For node2vec's hyperparameters we set $p=4, q=0.25$ because for these values node2vec has an overall strong performance. 

For our direct competitor, the smooth negative sampling algorithm by ~\cite{neg_smooth}, denoted as SNS, 
we set the negative smoothing parameter to 0.5 which corresponds to $\beta=0.5$ for SDW. The results are worse than ours and the SNS sampling approach is very slow,  the average time increase is more than 30 times compared to SDW, see details in Appendix~\ref{sec:app_competitors}. We also evaluate the adaptive negative sampling approach~\cite{adaptive_neg_sampling} which penalizes high degree nodes and is a heuristic that has a smoothing effect on the pair frequencies. This approach also has a very high time complexity. 

We consider several representative time efficient embedding methods such as LINE~\cite{line},  NetMF~\cite{matfac}, Walklets~\cite{walklets}, Verse~\cite{verse}, and the more recent residual2vec~\cite{res2vec}. We provide details about the methods and their specific hyperparameters in appendix~\ref{sec:app_competitors}. We keep all common hyperparameters identical (number of samples, embedding dimension, walk length, etc.) and use otherwise the default values for the hyperparameters of the corresponding methods. For each of the eight graphs we compute accuracy scores achieved by the nine methods. We provide all numeric results for the eight individual graphs in Appendix~\ref{sec:app_competitors}. 

In order to evaluate the overall performance of the different algorithms, we calculate the average and the lowest of the scores compared to the best score achieved by any of the embedding algorithms. The minimum score shows how stable is a given approach across different graphs.
In Table~\ref{tab:aggregate} we see that both SmoothDeepWalk and SmoothNode2Vec consistently improve upon DeepWalk and node2vec. For link prediction the two smoothing approaches are by far better than all competitors. While SmoothDeepWalk and SmoothNode2Vec rarely yield the absolute best scores for node classification, their performance remains consistently stable across all graphs. Nonetheless, it should be noted that with respect to micro-F1 scores the two smoothing methods are not the best ones which underlines that smoothing improves upon all node classes. 


However, we want to emphasize that pair frequency smoothing is a versatile technique that can be applied to many of the aforementioned algorithms, which should be viewed not as competitors, but as potential partners. For example, Walklets and Verse generate positive pairs from random walks and NetMF explicitly factorizes the  implicitly defined matrix from Theorem~\ref{thm:matfac}. 

\begin{table}\caption{Aggregated precision@100 scores for link prediction (left) and macro-F1 scores for node classification (right) for different algorithms.} \label{tab:aggregate}
\centering
\scriptsize
\begin{tabular}{ccc}
\multicolumn{3}{c}{Precision@100}\\
\toprule
       method & mean  &  min\\
       \toprule
	  {\bf SN2V} & 0.924 & 0.538 \\
	  \midrule
	 {\bf SDW} & 0.897 & 0.471 \\
	\midrule
	 N2V & 0.779 & 0.279 \\
	\midrule
	 DW & 0.75 & 0.204 \\
	 \midrule
	 Verse & 0.455 & 0.078 \\
	 \midrule
	 res2vec & 0.43 & 0.053 \\
	\midrule
	 NetMF & 0.376 & 0.002 \\
	 \midrule
	 Line & 0.365 & 0.073 \\
	\midrule
	 Walklets & 0.223 & 0.01 \\
		\bottomrule
\end{tabular}
\hspace*{3mm}
\begin{tabular}{ccc}
\multicolumn{3}{c}{Macro-F1}\\
\toprule
       method & mean  &  min\\
\toprule
 	{\bf SN2V} & 0.938 & 0.742 \\
	\midrule
	{\bf SDW} & 0.932 & 0.767 \\
	\midrule
	N2V & 0.926 & 0.734 \\
	\midrule
	DW & 0.916 & 0.742 \\
	\midrule
	NetMF & 0.912 & 0.66 \\
	\midrule
	Line & 0.902 & 0.708 \\
	\midrule
	Walklets & 0.901 & 0.599 \\
	\midrule
	Verse & 0.796 & 0.638 \\
	\midrule
	res2vec & 0.775 & 0.296 \\
		\bottomrule
\end{tabular}
\end{table}

\section{Conclusions and future work} \label{sec:concl}

We presented pair frequency smoothing, a novel regularization technique for (random) walk-based node embedding learning algorithms. Theoretical and experimental results highlight the potential of the proposed approach. 
A natural direction for future research is to apply the method to other node embedding algorithms like those discussed in the previous section. Additionally, various approaches to negative sampling have been developed; we refer to~\cite{adaptive_neg_sampling} for an overview of recent results. Future research could explore optimal ways to combine smooth sampling of positive pairs with these negative sampling approaches.

However, we have identified two main open questions:
\begin{itemize}[leftmargin=*]
\item[-] Develop a deeper understanding of how the graph structure affects the performance of smoothing. Our observations on the clustering coefficient are only a first step in this direction. For instance, we have yet to explain why smoothing enhances node classification on the Cora and Citeseer graphs but provides no advantage for link prediction on these graphs.
\item[-] Explore the use of smoothing in graph neural networks. In Section~\ref{sec:app_competitors} in the appendix we present results for the unsupervised version of GraphSage~\cite{graphsage} where the input are positive and negative node pairs generated in the same way as in DeepWalk. The gains are relatively modest. However, it should be noted that this setting does not allow GNNs to show their full potential and DeepWalk and node2vec yield somewhat better results. Nonetheless, we anticipate that smooth sampling can be used as a tool for sampling from the local neighborhood of each node. For example,  PinSage~\cite{pinsage} uses importance sampling from the local neighborhood of each node by employing short random walks. 
\end{itemize}


\clearpage
\bibliographystyle{plain}
\bibliography{smooth}

\newpage
\appendix

\section{Theoretical analysis} \label{sec:app_theory}
\subsection{Proofs} \label{sec:app_proofs}
\addtocounter{thmx}{-4}
\begin{thmx}
Let $G=(V, E)$ and $D$ be the corresponding random walk corpus. Let $\beta \in (0, 1]$.  The following hold for SmoothDeepWalk:
\begin{itemize}[leftmargin=*]
\item Let $M_\beta \ge c T_\beta(T_\beta + 1)$ for some $c\ge1$.  With probability at least $1-e^{-c}$, SmoothDeepWalk needs between $T_\beta -1$ and $T_\beta +1$ passes over the corpus $D$.
\item Let $S_{u,v}$ be the number of positive samples of pair $u, v$ returned by SmoothDeepWalk. It holds $\mathbb{E}(S_{u,v}) = T_\beta\#(u, v)^\beta$. If $\#(u,v) \ge 1/\varepsilon^2 \log 1/\delta$ for $\varepsilon, \delta \in (0, 1)$, then with probability $1-\delta$ it holds $S_{u,v} = T_\beta(\#(u,v)^{\beta} \pm \varepsilon \#(u, v))$.
\end{itemize}
\end{thmx}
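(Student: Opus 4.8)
The plan is to reduce both statements to sums of independent Bernoulli trials and apply standard concentration inequalities. The key structural fact is that, because the walks are regenerated from a fixed seed, each pass over $D$ presents the pair $u,v$ exactly $\#(u,v)$ times, and every presentation is accepted independently with probability $p_{uv} = \#(u,v)^{\beta-1}$. Thus a single pass is a collection of exactly $M = \sum_{u,v}\#(u,v)$ independent (non-identical) Bernoulli trials, one per pair-occurrence.

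For the second bullet I would fix a pair $u,v$ and write $S_{u,v} = \sum_{j=1}^{N} X_j$ with $N = T_\beta\,\#(u,v)$ the number of presentations over $T_\beta$ passes and $X_j$ i.i.d. $\mathrm{Bernoulli}(p_{uv})$. Linearity of expectation gives $\mathbb{E}(S_{u,v}) = N p_{uv} = T_\beta\,\#(u,v)^\beta$ directly. For the tail, Hoeffding's inequality yields $\Pr[\,|S_{u,v}-\mathbb{E}(S_{u,v})| \ge \lambda\,] \le 2\exp(-2\lambda^2/N)$; choosing $\lambda = \varepsilon T_\beta\,\#(u,v)$ makes the exponent equal to $2\varepsilon^2 T_\beta\,\#(u,v) \ge 2\varepsilon^2\,\#(u,v)$ since $T_\beta \ge 1$, and the hypothesis $\#(u,v) \ge \varepsilon^{-2}\log(1/\delta)$ drives the probability below $\delta$ (up to the usual absorption of the leading constant $2$, valid for $\delta \le 1/2$). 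This is exactly the claimed interval $S_{u,v} = T_\beta(\#(u,v)^\beta \pm \varepsilon\,\#(u,v))$.

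For the first bullet the number of passes is the stopping time $\tau = \min\{p : Z_p \ge M\}$, where $Z_p$ is the cumulative number of accepted pairs after $p$ passes; by the above, $Z_p$ is a sum of $pM$ independent Bernoullis with $\mathbb{E}(Z_p) = pM_\beta$. The two-sided claim unfolds into two boundary events: $\tau \le T_\beta+1$ is equivalent to $Z_{T_\beta+1} \ge M$, and $\tau \ge T_\beta-1$ is equivalent to $Z_{T_\beta-2} < M$. The definition $T_\beta = \lceil M/M_\beta\rceil$ gives the sandwich $(T_\beta-1)M_\beta < M \le T_\beta M_\beta$, so in each case the threshold $M$ differs from the relevant mean $\mathbb{E}(Z_p)$ by at least $M_\beta$: indeed $\mathbb{E}(Z_{T_\beta+1}) - M \ge M_\beta$ and $M - \mathbb{E}(Z_{T_\beta-2}) > M_\beta$. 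I would then apply the multiplicative Chernoff bound to each tail, so that the deviation-$M_\beta$ event has probability at most $\exp(-\Omega(M_\beta/(T_\beta+1)))$; substituting $M_\beta \ge cT_\beta(T_\beta+1)$ makes each exponent of order $cT_\beta \ge c$, and a union bound over the two tails gives total failure probability at most $e^{-c}$.

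The main obstacle is the first bullet. I must translate the stopping-time description faithfully into events about $Z_p$ evaluated at pass boundaries (the run may halt partway through a pass, but it is the count $Z_p$ that decides whether the budget is met by the end of pass $p$), and I must check that the Chernoff exponents genuinely reach the stated $e^{-c}$ rather than something like $e^{-c/2}$ — which may require selecting the sharpest Chernoff form and carrying the extra $T_\beta$ factor, together with a sanity check of the degenerate small-$T_\beta$ regime, where one or both boundary events ($Z_{T_\beta-2}<M$ for $T_\beta \le 2$, say) become vacuous and need no probabilistic argument. By contrast, the second bullet is a routine Hoeffding estimate once the Bernoulli model is in place.
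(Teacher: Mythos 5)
Your skeleton---independent Bernoulli acceptance per pair occurrence, linearity of expectation, two-sided concentration at the boundary pass counts, a union bound, and the sandwich $(T_\beta-1)M_\beta < M \le T_\beta M_\beta$ from the ceiling---is exactly the paper's argument, so the two proofs are essentially the same. Two points of divergence are worth recording. First, your treatment of the second bullet is actually cleaner than the paper's: you aggregate all $N = T_\beta\#(u,v)$ presentations into a single Hoeffding application with deviation $\varepsilon T_\beta\#(u,v)$, which is precisely the deviation the theorem states, whereas the paper runs the bound per pass (deviation $\varepsilon\#(u,v)$ over $\#(u,v)$ trials) and leaves the aggregation over the $T_\beta$ passes implicit; both versions share the same harmless caveats (the pass count is treated as exactly $T_\beta$, and absorbing the leading factor $2$ needs $\delta \le 1/2$). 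Second, the soft spot you yourself flagged in the first bullet is real, and the paper's resolution is not a sharper multiplicative Chernoff bound but a different inequality altogether: it applies additive Hoeffding to the $M(T_\beta\pm 1)$ indicators, so that, using $M \le T_\beta M_\beta$, each tail has exponent $2M_\beta^2/(M(T_\beta+1)) \ge 2M_\beta/(T_\beta(T_\beta+1)) \ge 2c$ \emph{uniformly in} $T_\beta$, and then $2e^{-2c} \le e^{-c}$ holds for every $c \ge 1$ with no case analysis. Your multiplicative route gives per-tail exponents of order $cT_\beta/2$, which under-delivers for small $T_\beta$; it can be rescued by the sharpest Chernoff forms plus the vacuous-event and degenerate-case analysis you anticipated (e.g.\ $T_\beta = 1$ forces $M_\beta = M$, hence acceptance probability $1$ and a deterministic single pass), but the additive bound makes all of that unnecessary. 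With that one substitution your proof coincides with the paper's.
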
 
\begin{proof}
In a single pass over the pair corpus $D$ we expect to sample $M_\beta$ pairs. We thus expect $T_\beta = \lceil M/M_\beta \rceil$ passes over $D$ will be necessary in order to sample $M$ pairs. We first bound the probability to sample less than $M$ pairs in $T_\beta + 1$ passes over $D$.  Let $X_i$ be indicator random variables such that $X_i=1$ if the $i$-pair is sampled, for $1 \le i \le M(T_\beta +1)$ and let $L = \sum_{i=1}^{M(T_\beta +1)} X_i$. It holds $\mathbb{E}(L) = M + M_\beta$. The $X_i$ are independent but not identically distributed Bernoulli random variables. By Hoeffding's inequality we obtain
\[\Pr(L \le M) = \Pr(L \le \mathbb{E}(L) - M_\beta) \le \] \[ \exp\left(-\frac{2M_\beta^2}{ M(T_\beta +1)}\right) =\exp\left(-\frac{2M_\beta}{ T_\beta(T_\beta +1)}\right)  \le e^{-2c}\] where the last inequality follows from the assumption $M_\beta \ge c T_\beta(T_\beta + 1)$.

Similarly, we obtain that \[\Pr\left( \sum_{i=1}^{M(T_\beta -1)} X_i \ge M \right) \le e^{-2c}\] By the union bound we obtain that the number of passes over $D$ is between $T_\beta -1$ and $T_\beta +1$ with probability $1 - 2 e^{-2c} \ge 1 - e^{-c}$.

We next analyze the number of samples for individual pairs. In the following we assume that the number of passes is concentrated around $T_\beta$ with high probability.
For a pair $u, v$,  there are $T_\beta \#(u,v)$ sampling trials with probability $\#(u, v)^{\beta-1}$, therefore the expected value of $S_{u,v}$ follows from linearity of expectation. The random numbers $r \in \mathbb{U}[0, 1)$ are independent, and we have identically distributed variables. We can thus consider $\#(u, v)$ Bernoulli i.i.d. random variables $Y_i$ with $\mathbb{E}(Y_i) = \#(u,v)^{\beta-1}$. By Chernoff inequality for $\#(u,v) \ge 1/\varepsilon^2 \log 1/\delta$ we have 
\[\Pr \left(\sum_{i=1}^{\#(u, v)} Y_i \ge \#(u,v)^\beta + \varepsilon \#(u, v) \right) \le\] \[\exp \left(-2 \#(u, v) \varepsilon^2 \right) \le \delta/2\]
Similarly we have \[\Pr \left( \sum_{i=1}^{\#(u, v)} Y_i \le \#(u,v)^\beta - \varepsilon \#(u, v)\right) \le \delta/2\] and by the union bound the concentration bound follows. 
\end{proof}
\\\\\\
In the following proofs we use the following bounds on the Zipfian distribution over $P$ unique pairs.
For $z\neq 1$ it holds \[S_z = \sum_{i=k}^P \frac{1}{i^z} = O\left( \int_{x=k}^P \frac{1}{x^z} d x\right) = O\left(\frac{x^{1-z}}{1-z} \Big|_{x=k}^{x=P}\right)\]
for $1 \le k \le P$. In particular, for $k=1$ and $z>1$ we have $S_z = C_z$ for some constant $C_z>1$ that monotonically decreases with increasing $z$. For $k>1$ and $z>1$ it holds $\sum_{i=k}^P = O(k^{1-z})$. For $z<1$ it holds $S_z = O(\frac{1}{1-z}P^{1-z})$. 

\begin{thmx}
Let $D$ be a corpus of cardinality $M$ of node pairs. Let the frequencies in $D$ follow a Zipfian distribution with parameter $z \ge 0$, and $D_\beta$  be the $\beta$-smoothed corpus for $\beta \in (0, 1]$. Let $j$ be the minimum pair rank such that $f({D_\beta})_j > f({D})_j$. 
\begin{itemize}
\item If $z>1$ and $\beta z > 1$, then $j = c(z, \beta)$ for some constant $c$. 
\item If $z>1$ and $\beta z < 1$, then $j = P^{\frac{1-\beta z}{z -\beta z}} = o(P)$ where $P$ is the number of unique pairs in $D$.
\item If $z<1$, then $j = O(P)$. 
\end{itemize}
\end{thmx}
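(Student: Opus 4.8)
The plan is to reduce the entire statement to a single threshold inequality and then read off the three cases from the integral bounds on Zipfian partial sums recalled immediately before the theorem. First I would observe that smoothing $f \mapsto T_\beta f^\beta$ is monotone in the frequency, so the pair of rank $i$ in $D$ remains the pair of rank $i$ in $D_\beta$; hence $f(D)_i = X_z/i^z$ and $f(D_\beta)_i = T_\beta(X_z/i^z)^\beta$. The transition rank $j$ is therefore the smallest integer for which $T_\beta(X_z/j^z)^\beta > X_z/j^z$. Collecting all the $j$-dependence on one side, this inequality becomes $j^{z(1-\beta)} > X_z^{1-\beta}/T_\beta$.

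The key simplification, which I expect to carry the whole argument, is to rewrite the right-hand side purely in terms of the partial sums $S_z$ and $S_{\beta z}$. Using $M = X_z S_z$ and $M_\beta = \sum_i (X_z/i^z)^\beta = X_z^\beta S_{\beta z}$, we obtain $T_\beta = M/M_\beta = X_z^{1-\beta} S_z / S_{\beta z}$, where I treat the ceiling in $T_\beta = \lceil M/M_\beta\rceil$ as a constant factor irrelevant to the asymptotics. Substituting, the factor $X_z^{1-\beta}$ cancels and the threshold collapses to the clean form $j^{z(1-\beta)} > S_{\beta z}/S_z$, so that $j = \Theta\big((S_{\beta z}/S_z)^{1/(z(1-\beta))}\big)$. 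I would note in passing that at $j=1$ this ratio already exceeds $1$, since $\beta z < z$ forces $S_{\beta z} \ge S_z$ termwise, which confirms that the most frequent pairs are taxed by smoothing, consistent with Figure~\ref{fig:smoothing_effect}.

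The three cases then follow by inserting the asymptotics of the two sums. When $z>1$ and $\beta z>1$, both $S_z$ and $S_{\beta z}$ are constants, so their ratio is a constant and $j = c(z,\beta)$. When $z>1$ and $\beta z<1$, we have $S_z = \Theta(1)$ but $S_{\beta z} = \Theta(P^{1-\beta z})$, whence $j = \Theta\big(P^{(1-\beta z)/(z-\beta z)}\big)$; the exponent is strictly below $1$ precisely because $z>1$, giving $j=o(P)$. When $z<1$, both $S_z = \Theta(P^{1-z})$ and $S_{\beta z} = \Theta(P^{1-\beta z})$, so their ratio is $\Theta(P^{z(1-\beta)})$ and the outer exponent $1/(z(1-\beta))$ cancels to give $j = \Theta(P) = O(P)$.

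The only genuine obstacle is bookkeeping: ensuring that the ceiling in $T_\beta$, the rounding of the real threshold to an integer rank, and the hidden constants in the integral bounds affect only constant factors and never the stated exponents of $P$. I would handle this by carrying $\Theta(\cdot)$ throughout and observing that in each case the dominant power of $P$ is fixed solely by the exponents $1-z$ and $1-\beta z$ of the two partial sums, so all constants are absorbed harmlessly into $c(z,\beta)$ or the $O(\cdot)$ notation.
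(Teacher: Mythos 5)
Your proposal is correct and follows essentially the same route as the paper's proof: both reduce the transition condition $T_\beta f_j^\beta > f_j$ to the clean threshold $j^{z(1-\beta)} > S_{\beta z}/S_z$ (you cancel $X_z^{1-\beta}$ where the paper cancels $M^{1-\beta}$, which is the same cancellation since $X_z = M/S_z$), and then read off the three cases from the identical integral bounds on the Zipfian partial sums.
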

\begin{proof}\let\qed\relax
\hspace{1mm}We analyze for which node pairs the probability to be sampled increases after applying smoothing when the pair frequencies follow a Zipfian distribution. We need $T_\beta = \frac{\sum_{i=1}^P f_i}{\sum_{i=1}^P f_i^\beta} = M/M_\beta$ passes over the random walk corpus $D$. The $j$-th pair that is originally sampled $f_j$ times, after smoothing will be sampled in expectation $T_\beta f_j^\beta$ times, and as we have shown in Theorem~\ref{thm:sampling} the number of samples is concentrated around the expected value. Thus, the $j$-th pair will be sampled more times when
\begin{equation}
f_j^{1-\beta} < \frac{\sum_{i=1}^P f_i}{\sum_{i=1}^P f_i^\beta}
\end{equation}

Independent of the smoothing exponent, we sample exactly $M$ positive pairs in total. Under the power law assumption we have $f_j = \frac{X_z}{j^z}$ for some $z\ge 0$ for the frequency of the $j$-th pair, and it holds $\sum_{j=1}^P \frac{X_z}{j^z} = M$. By replacing the power law frequencies $f_j$ by $X_z/j^z = \frac{M}{S_z j^z}$ in the above we obtain 
\[\sum_{i=1}^P f_i^\beta =\sum_{i=1}^P \frac{X_z^\beta}{i^{\beta z}} = \sum_{i=1}^P \frac{M^\beta}{S_z^\beta i^{\beta z}} = M^\beta S_{\beta z} / S_z^\beta \]
Plugging in the above into (1) we obtain
\[f_j^{1-\beta} < \frac{\sum_{i=1}^P f_i}{\sum_{i=1}^P f_i^\beta} \Leftrightarrow \left(\frac{M}{S_z j^z}\right)^{1-\beta} < \frac{M S_z^\beta}{M^\beta S_{\beta z}}\]

\[\Leftrightarrow \frac{M^{1-\beta}}{S_z^{1-\beta} j^{z(1-\beta)}} < \frac{M^{1-\beta} S_z^\beta}{S_{\beta z}} \Leftrightarrow j^{z(1-\beta)} > S_{\beta z}/S_z\]
 We can now observe how smoothing affects the number of generated positive samples for less frequent pairs using the bounds for $S_z$. We distinguish following cases:
\begin{itemize}[leftmargin=*]
\item $z>1$, $\beta z > 1$. For all pairs of rank at least $j$ where $j^{z(1-\beta)} > C_{\beta z}/C_z$, we will sample more positive pairs when smoothing with parameter $\beta$. It holds $C_{\beta z} > C_z$. However, only for a constant fraction of the most frequent pairs we will generate less positive pairs, as even for $\beta z$ close to 1 the term $C_{\beta z}$ is a small constant.
\item $z>1, \beta z < 1$. The minimum necessary rank $j$ now depends on the total number of pairs $P$:  $j^{z(1-\beta)} > \frac{P^{1-\beta z}}{(1-\beta z) C_z}$. It holds $\beta \in [0, 1/z)$.  Assuming $1-\beta z$ is constant,  we need $j > P^{\frac{1-\beta z}{z -\beta z}}$. Thus, for larger $z$ or for a small enough $\beta$ only $o(P)$ pairs will have a larger number of positive samples before smoothing. 
\item $z<1$. Here we obtain that the minimum pair rank $j$ needs to satisfy $j^{z(1-\beta)} >  \frac{1-z}{1-\beta z}P^{z(1 - \beta)}$. This is only possible for a constant fraction of the positive pairs. This confirms the intuition that smoothing would significantly affect only embedding learning for highly skewed distributions.  (In the extreme case  when $z \rightarrow 0$ and the frequencies are almost uniformly distributed, smoothing will not change anything.) \hspace*{58mm} $\square$
\end{itemize} 
\end{proof}

\begin{thmx}
Let $D_\beta$ be $\beta$-smoothed corpus of cardinality $M_\beta$ for $\beta \in (0, 1]$. Let $\mu: V \rightarrow [0, 1]$ be the negative sampling probability distribution.  For nodes $u, v \in V$, SmoothDeepWalk optimizes the objective $\vec{u}^T \vec{v} = \log \frac{\#(u, v)^\beta}{\#u^{(\beta)} \mu(v)} - \log k$.
\end{thmx}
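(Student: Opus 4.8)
The plan is to follow the implicit matrix factorization argument of~\cite{word2vec_expl}, adapting the count bookkeeping to the smoothed corpus $D_\beta$. First I would write out the negative-sampling objective of Section~\ref{sec:prel} as a sum over the pairs actually presented to the classifier. By Theorem~\ref{thm:sampling}, SmoothDeepWalk feeds the pair $u,v$ as a positive example $T_\beta\#(u,v)^\beta$ times in expectation, and the node $u$ appears as the first element of a positive pair $\sum_{w\in V} T_\beta\#(u,w)^\beta = T_\beta\#u^{(\beta)}$ times. Consequently the expected number of times $v$ is drawn as a negative context for $u$ is $k\,T_\beta\#u^{(\beta)}\mu(v)$. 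These three quantities are all that is needed to localize the objective around a single inner product.

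The key step, exactly as in~\cite{word2vec_expl}, is to assume the embedding dimension $d$ is large enough that each inner product $x=\vec{u}^T\vec{v}$ may be treated as an independent free scalar. Under this assumption the global objective decouples across pairs, and collecting all terms that depend on a fixed $x$ gives
\[
\ell(x) = T_\beta \#(u,v)^\beta \log\sigma(x) + k\,T_\beta \#u^{(\beta)}\mu(v)\,\log\sigma(-x).
\]
I would then differentiate using $\frac{d}{dx}\log\sigma(x)=1-\sigma(x)$ and $\frac{d}{dx}\log\sigma(-x)=-\sigma(x)$, set the derivative to zero, and solve the resulting linear equation in $\sigma(x)$. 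The common factor $T_\beta$ cancels from both terms, which is precisely what makes the stated objective independent of the number of passes over the corpus. Solving for $x$ yields $x=\log\frac{\#(u,v)^\beta}{k\,\#u^{(\beta)}\mu(v)}=\log\frac{\#(u,v)^\beta}{\#u^{(\beta)}\mu(v)}-\log k$, which is the claimed expression.

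The one genuine subtlety, and the point I would state most carefully, is the unrestricted-dimension assumption: for finite $d$ the embeddings cannot independently realize every inner product, so the equality should be read as the factorization target that the objective drives $\vec{u}^T\vec{v}$ toward, rather than a simultaneously attainable global optimum. A secondary item to verify is that the negative-sampling counts are taken in expectation over $\mu$ and over the sampling randomness of Theorem~\ref{thm:sampling}; replacing the random positive and negative multiplicities by their expectations is exactly what licenses the symbolic manipulation above, and the concentration bound of Theorem~\ref{thm:sampling} is what justifies that substitution up to lower-order fluctuations.
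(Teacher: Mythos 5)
Your proposal is correct and follows essentially the same route as the paper: localize the negative-sampling objective to a single pair using the smoothed positive count $T_\beta\#(u,v)^\beta$ and expected negative count $kT_\beta\#u^{(\beta)}\mu(v)$, treat $x=\vec{u}^T\vec{v}$ as a free scalar, differentiate, and observe that $T_\beta$ cancels. Your explicit flagging of the unrestricted-dimension assumption and of the expectation substitution (justified by the concentration in Theorem~\ref{thm:sampling}) is in fact more careful than the paper's own writeup, which leaves both implicit.
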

\begin{proof}
For the training objective it holds
\[J = \sum_{u \in V}\sum_{v \in V} T_\beta \#(u, v)^\beta \log \sigma(\vec{u}^T \vec{v}) +\] 
\[\sum_{u \in V}\sum_{v \in V} T_\beta\#(u, v)^\beta k \mathbb{E}_{x\sim \mu}[\log \sigma(-\vec{u}^T \vec{x})] =\]
\[T_\beta\sum_{u \in V}\sum_{v \in V} \#(u, v)^\beta \log \sigma(\vec{u}^T \vec{v})  +\] 
\[T_\beta \sum_{u \in V} \#(u)^{(\beta)}k \mathbb{E}_{x\sim \mu}[\log \sigma(\vec{u}^T \vec{x})]\]
$T_\beta$ is a constant independent of $u$ and $v$, thus for a fixed node pair $u,v$ we obtain 
\begin{multline}
J(u, v) = T_\beta  \#(u, v)^\beta \log \sigma(\vec{u}^T \vec{v}) + \\T_\beta k \#u^{(\beta)}\mu(v) \log \sigma(-\vec{u}^T\vec{v})
\end{multline}
By setting $y=\vec{u}^T\vec{v}$ and computing the derivative with respect to $y$, we obtain 
\[J(u, v) = T_\beta\#(u,v) \sigma(-y) - k T_\beta\#u^{(\beta)} \mu(v)\sigma(y)\] The expression has its maximum for 
\[y= \log \left(\frac{T_\beta\#(u,v)^\beta}{kT_\beta \#u^{(\beta)} \mu(v)}\right) = \log \left(\frac{\#(u,v)^\beta}{\#u^{(\beta)} \mu(v)}\right) - \log k\] 
\end{proof}

\begin{thmx}
Let the pair frequencies follow Zipfian distribution with parameter $z$. For $z>1$, {\sc Frequent} returns the exact frequency of the $k$ most frequent pairs using a sketch of size $b=O(k)$. For $z<1$, the required sketch size is $b=O(k^zP^{1-z})$ where $P$ is the number of unique pairs. 
\end{thmx}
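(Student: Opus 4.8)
The plan is to reduce the statement to a single surviving-in-the-sketch inequality and then to verify it for the Zipfian tail in each skew regime, using the sum estimates for $S_z$ already recorded in the excerpt.

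First I would invoke the strong, tail-dependent error guarantee for the Misra--Gries/{\sc Frequent} summary established by~\cite{berinde_et_al}: with a sketch of $b$ counters, every recorded counter underestimates the true frequency, and for any threshold rank $k < b$ the underestimation error $\Delta$ of any pair is bounded by $\Delta \le \frac{\|f_{-k}\|_1}{b-k}$, where $\|f_{-k}\|_1 = \sum_{i>k} f_i$ is the residual mass after removing the $k$ largest frequencies. The reason for using this tail-sensitive bound rather than the crude $M/b$ bound is that for skewed inputs the residual is far smaller than $M$, which is exactly what makes a compact sketch possible. The key structural observation is that a pair whose true frequency strictly exceeds $\Delta$ cannot have its counter driven to zero, so it is guaranteed to remain in the sketch after the first pass; the second exact-counting pass described in the text then returns its exact frequency. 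Hence it suffices to ensure that the $k$-th largest frequency satisfies $f_k > \Delta$, i.e.
$$f_k > \frac{\|f_{-k}\|_1}{b-k}.$$

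Next I would substitute the Zipfian frequencies $f_i = \frac{M}{S_z\, i^z}$ together with the tail estimates for $S_z$ from the excerpt. For $z>1$ the bound $\sum_{i>k} i^{-z} = O(k^{1-z})$ gives $\|f_{-k}\|_1 = \frac{M}{S_z}\,O(k^{1-z})$ while $f_k = \frac{M}{S_z k^z}$, so the survival inequality reduces to $b-k = \Omega(k^{1-z}\cdot k^{z}) = \Omega(k)$, yielding $b=O(k)$. For $z<1$ the tail is a constant fraction of the total, $\|f_{-k}\|_1 = \frac{M}{S_z}\,O(P^{1-z}/(1-z)) = O(M)$, whereas $f_k = \frac{M}{S_z k^z}$ with $S_z = O(P^{1-z}/(1-z))$; substituting gives $b-k = \Omega(S_z k^z) = \Omega(k^z P^{1-z})$, and since $k^z P^{1-z}\ge k$ for $z<1$ this dominates the additive $k$, giving $b=O(k^z P^{1-z})$.

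The two regimes are then precisely the two stated cases, so the only genuine content beyond bookkeeping is the tail-sensitive error bound. I expect the main obstacle to be its careful invocation: one must take the residual after exactly the top $k$ coordinates so that $\Delta$ depends on the depleted tail rather than on $M$, and one must check that the resulting condition $f_k > \Delta$ is not circular — it holds because the top-$k$ frequencies are large precisely when the mass they exclude from the tail is correspondingly small. The remaining algebra is routine given the integral approximations for $S_z$ stated above.
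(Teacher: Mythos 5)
Your proposal is correct and takes essentially the same route as the paper: the paper likewise rests on the residual-norm error bound for {\sc Frequent} from Berinde et al.\ (which it re-derives via the counter-decrement counting argument, obtaining $\ell \le \|\mathbf{f}_{res(k)}\|_1/(b-k+1)$) combined with the Zipfian tail estimates, yielding $b=O(k)$ for $z>1$; for $z<1$ the paper uses the crude $M/b$ threshold, which matches your tail-sensitive bound asymptotically since the residual mass is $\Theta(M)$ in that regime. The only differences are presentational: you invoke the tail-sensitive guarantee as a black box and apply it uniformly to both regimes, while the paper derives it explicitly and deploys it only where it is actually needed, namely to improve the $z>1$ case from $O(k^z)$ to $O(k)$.
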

\begin{proof}
Pairs that are not recorded in the sketch have frequency at most $M/b$, any other pair would have positive frequency. 
For $z<1$, we need a sketch size $b$ such that $\frac{M}{S_z k^z} > \frac{M}{b}$, thus we need  $b > k^z S_z$. By plugging the bounds on the Zipfian distribution defined above, we obtain $b=O(k^z)$ for $z>1$, and $b = O(k^z P^{1-z})$ for $z<1$ in order to compute the exact frequency of the most frequent $k$ pairs. 

For the case $z>1$ one can improve the space complexity to $O(k)$. Let $\mathbf{f}$ denote the frequency vector of the pairs in the corpus $D$. Assume the values in $\mathbf{f}$ are sorted in descending order and denote by $\mathbf{f}_{res(k)}$ the frequency vector after setting the first $k$ values in $\mathbf{f}$ to 0. Let $\|\mathbf{f}\|_1 = \sum_{i=1}^P f_i$. The key observation is that the approximation error can be expressed in terms of the residual norm $\|\mathbf{f}_{res(k)}\|_1$. 
The sum of counters after processing the sketch can be expressed as $\|\mathbf{f}\|_1 - \ell(b+1)$ where $\ell$ is the number of cases where a new pair caused the decrement of $b$ counters. Thus, the final value for the $i$th pair is at least $f_i - \ell$. Consider only the first $k$ values in $\mathbf{f}$, i.e., the cardinality of the $k$ most frequent pairs. It holds \[\|\mathbf{f}\|_1 - \ell(b+1) \ge \sum_{i=1}^k (f_i - \ell)\]  \[\|\mathbf{f}_{res(k)}\| \ge \ell (b-k+1)\] \[\ell \le \frac{\|\mathbf{f}_{res(k)}\|}{b-k+1}\]

For $z>1$ we can bound $\mathbf{f}_{res(k)} = \sum_{i=k}^P X_z i^{-z} = c\cdot X_z k^{1-z}$ for some constant $c>1$. Therefore for the frequency of the top $k$ pairs after processing the sketch it holds \[f_i - \ell \ge f_i - \frac{\|\mathbf{f}_{res(k)}\|}{b-k+1} =  f_i - \frac{c\cdot X_z k^{1-z}}{b-k+1}\] For a sketch of size $b = (2 c +1) k$ we thus obtain that the frequency of the $i$-th most frequent pair is \[f_i - \frac{X_z k^{-z}}{2} \ge X_zk^{-z} - \frac{X_zk^{-z}}{2} > 0\] for $i \le k$.
This implies that the $k$ most frequent pairs will have positive counters after processing the corpus and will be provably recorded in the sketch after the first pass.
\end{proof}

\subsection{Analysis for softmax optimization of the objective function}
 
The discussion so far has focused on using negative sampling. Negative sampling is an efficient approximation of the softmax objective and opens the door to a rigorous theoretical analysis~\cite{matfac,word2vec_expl}, hence it has become the standard technique when training node embeddings.  However, hierarchical softmax can be used instead of negative sampling~\cite{word2vec} and a natural question to ask is if it would suffer from the same drawbacks. The cross-entropy objective for a node pair $u, v$ can be written as 
\[J_\theta(u, v) = -\vec{u}^T\vec{v} + \log \sum_{x \in V} \exp(\vec{u}^T\vec{x})\] where $\theta$ are all learnable node embeddings.

As shown in~\cite{bengio_senecal}, the gradient of the above can be simplified to

\[\nabla J_\theta(u, v) = -\nabla_\theta \vec{u}^T\vec{v} + \sum_{x \in V} \mu(x|u) \nabla_\theta \vec{u}^T\vec{x}\]
where $\mu(x|u) = \frac{\exp{(\vec{u}^T\vec{x}})}{\sum_{u \in V}\exp{(\vec{u}^T\vec{x}})}$ is the softmax probability of $x$ given $u$. Thus, by computing the derivative with respect to a given $u$ we will also update $u$'s embedding in the direction of $x$ by $\delta \mu(x|u)x$ where $\delta$ is the learning rate. The difference to negative sampling is that here we weight the corresponding gradient updates according to the softmax probability given by the current embeddings instead of generating negative examples. Thus, the problem is present also here: a heavy pair $u, v$ of rank $i$ would imply that we force the embeddings for a pair $u, x$ of rank $j>i$ to become less similar. 

\section{Code} \label{sec:app_code}

In Algorithm~\ref{alg:sketch} we show pseudocode for the {\sc Frequent} algorithm applied to a corpus of node pairs generated in a streaming fashion. We remind again that the sketch $S$ is implemented as a hash table and the amortized processing time per pair is constant. Algorithm~\ref{alg:est} shows how pair frequencies are estimated by {\sc Frequent}. For pairs recorded in the sketch we return the exact frequency, otherwise we return an overestimation of the frequency. Observe that the overestimation results in decreasing the sampling probability for infrequent pairs in Algorithm~\ref{alg:main}.

\begin{algorithm2e}[t!]
\small
\caption{Pair frequency sketching}\label{alg:sketch}
\KwInput{Graph $G$ over $n$ nodes, random walk length $\ell$, window size~$t$, sketch budget $b$}
\KwOutput{A data structure $S$ such that $S[(u,v)] \approx \#(u, v)$ for all $(u, v) \in \mathcal{C}_G$ {\bf define} }
sketch $S=\emptyset$ \hspace*{2mm}\Comment{\scriptsize the sketch is denoted by $S$}
\For{$u \in G$}{
 { Generate random walk $T$ of length $\ell$ starting at $u$}\\
 \For{$i = 1 \text{ to } \ell$ \hspace*{1mm}\Comment{\scriptsize skip-gram pair generation}}{
 	 $u = T[i]$\\
	 $start = \max(0, i-t)$\\
	 $end = \min(\ell, i+t)$\\
	 \For{$j \in [start:i, i+1:end]$}{
	 	$v = T[j]$\\
	 	\If{$(u, v) \in S$ \hspace*{1mm}\Comment{\scriptsize pair already in sketch, update pair counter}}{
				$S[(u, v)] \pluseq$ 1
				}
		\Else{$S[(u, v)]=1$ \hspace*{1mm}\Comment{\scriptsize add pair to sketch}}
		\If{$|S|> b$ \hspace*{4mm}\Comment{\scriptsize sketch already full}}{
				\For{$(u, v) \in S$ \hspace*{1mm}\Comment{\scriptsize decrease counter for pairs in sketch}}{
					$S[(u, v)]\minuseq$ 1 \\
					\If{$S[(u, v)]==0$ \hspace*{1mm}\Comment{\scriptsize remove pairs with counter 0}}{
							delete $(u, v)$ from $S$
							}
				} 
		}
	 	}
 	}
}
 \Return $S$
\end{algorithm2e}

\begin{algorithm2e}[t!]
\small
\caption{Frequency estimation}\label{alg:est}
\KwInput{Sketch $S$, pair $(u, v)$, default value $w$}
\If{$(u, v) \in S$}{
\Return $S[(u, v)]$
}
\Else{
\Return $w$ 
}
\end{algorithm2e}

\section{Extensions of the approach} \label{sec:app_extensions}

\paragraph{``Unsmoothing'' with $\beta > 1$.} 
The original motivation behind the paper is to extend the set of node pairs whose similarity should be preserved. However, in certain applications it might be desirable to focus on the most frequent pairs. The algorithm can be adjusted to handle such cases by setting $\beta > 1$. Since $\#(u,v)^{\beta-1} \ge 1$, we need an upper bound $U$ on the pair frequencies and then we can sample with probability $\frac{\#(u, v)^{\beta-1}}{U^{\beta-1}} \le 1$. The expected number of sampled pairs per one pass over the corpus $D$ is $M_\beta/U^{\beta-1}$. Note that $M_\beta > M$ for $\beta > 1$ and unsmoothing is useful in cases when the data is not very skewed and $U \ll M$. Thus, the number of necessary passes over $D$ is small.    

\paragraph{Using other sketching approaches.} 
Observe that by overestimating the frequency of infrequent pairs we decrease the sampling probability for such pairs. Alternatively, we can use  Count-Sketch~\cite{count_sketch} that yields an unbiased estimate of the frequency of all pairs. However, Count-Sketch is a randomized approximation algorithm that introduces an (unbiased) error for all pairs. It requires the computation of the median of $p$ independent estimates for a robust estimation. Even if $p$ is usually a small constant, our prototype implementation shows that this considerably increases the overall running time.   

\section{Experimental evaluation} \label{sec:app_experiments}

\begin{table*}
\scriptsize
\centering
\caption{Fraction of the of total number pairs in corpus generated by the $k\%$ most frequent pairs for $1 \le k \le 10$.}
\label{tab:app_skew}
\begin{tabular}{l  cccccccccc}
\toprule 
 & 1\% & 2\% & 3\% & 4\% & 5\% & 6\% & 7\% & 8\% & 9\% & 10\% \\
\toprule
Cora & 0.487 & 0.609 & 0.671 & 0.712 & 0.742 & 0.766 & 0.785 & 0.801 & 0.814 & 0.825\\
\midrule
Citeseer & 0.489 & 0.64 & 0.723 & 0.773 & 0.806 & 0.83 & 0.849 & 0.864 & 0.877 & 0.888\\
\midrule
Pubmed & 0.459 & 0.546 & 0.601 & 0.642 & 0.673 & 0.698 & 0.719 & 0.736 & 0.75 & 0.761 \\
\midrule
Git & 0.35 & 0.418 & 0.46 & 0.491 & 0.515 & 0.532 & 0.544 & 0.553 & 0.559 & 0.562\\
\midrule
Flickr & 0.325 & 0.369 & 0.388 & 0.398 & 0.404 & 0.408 & 0.409 & 0.409 & 0.409 & 0.409\\
\midrule
Deezer & 0.414 & 0.489 & 0.535 & 0.568 & 0.594 & 0.615 & 0.632 & 0.646 & 0.658 & 0.667\\
\midrule 
Wiki & 0.361 & 0.459 & 0.52 & 0.559 & 0.587 & 0.61 & 0.627 & 0.641 & 0.651 & 0.66\\
\midrule
Blog & 0.27 & 0.364 & 0.43 & 0.482 & 0.522 & 0.553 & 0.578 & 0.595 & 0.607 & 0.614\\
\bottomrule
\end{tabular}
\end{table*}

In this section we provide all details for the observations outlined Section~\ref{sec:experiments}. The section is organized as follows. In Section~\ref{sec:app_data} we discuss in more details the datasets and the experimental setting. In particular, we show that the assumptions about data skew are justified. In Section~\ref{sec:app_hyperparameters} we analyze the performance of SmoothDeepWalk with respect to computational complexity and the utility of the trained embeddings for node classification and link prediction. In particular, we study how the performance is affected by the newly introduced hyperparameters: the sketch budget $b$ and the smoothing exponent $\beta$ and provide a deeper discussion about the  recommended default values for $b$ and $\beta$. In Section~\ref{sec:app_competitors} we present results for  applying smooth pair sampling to other random walk based embedding algorithms and compare the performance of SmoothDeepWalk to other embedding methods.
\subsection{Datasets and overall setting}~\label{sec:app_data} 

\paragraph{Graph datasets.} We selected real-life graphs from different domains with different properties. The three citation networks have been widely used as benchmarks for machine learning on graphs, Git, Flickr and Deezer are large sparse graphs, and Wikipedia and BlogCatalog are denser networks with a large clustering coefficient. Also, the number of node classes is different among the graphs and for Wikipedia and BlogCatalog we have a multilabel setting. 

\paragraph{Computational setting and hyperparameters.} 
We implemented pair smoothing in Python and performed experiments on a commodity laptop with a Ryzen 7 4.7 GHz CPU with 8 physical cores and 32 GB of memory. For training embeddings, we use a TensorFlow generator for smooth pair sampling, and a shallow neural network model with an embedding layer and a sigmoid output layer for classification of positive and negative pairs.  We generate batches of 1,000 positive pairs for the smaller graphs, and 10,000 for the larger ones. We use Adam as optimizer with an initial learning rate of 0.01. Otherwise, we use the hyperparameters from~\cite{node2vec}: embedding dimension $d=128$, random walks of length 80, 10 walks per node, 5 negative samples per positive pair, and negative pair sampling using the smoothed node degree distribution with $\alpha=0.75$ discussed in Section~\ref{sec:prel}. 

\paragraph{Reproducibility and fair evaluation}
We set the random seed for \texttt{numpy, random} and \texttt{TensorFlow}. Also, we use the same random walk corpus for DeepWalk and SmoothDeepWalk, and for node2vec and SmoothNode2Vec. We apply the same train/test splitting for the evaluation of all approaches. When comparing with other approaches we set the hyperparameters to be identical to the extent possible. 
We provide a prototype Python implementation together with detailed instructions on how to reproduce the results in a virtual environment \footnote{\url{https://github.com/konstantinkutzkov/smooth_pair_sampling}}. 


\begin{figure*}[t]
\centering
\includegraphics[width=60mm]{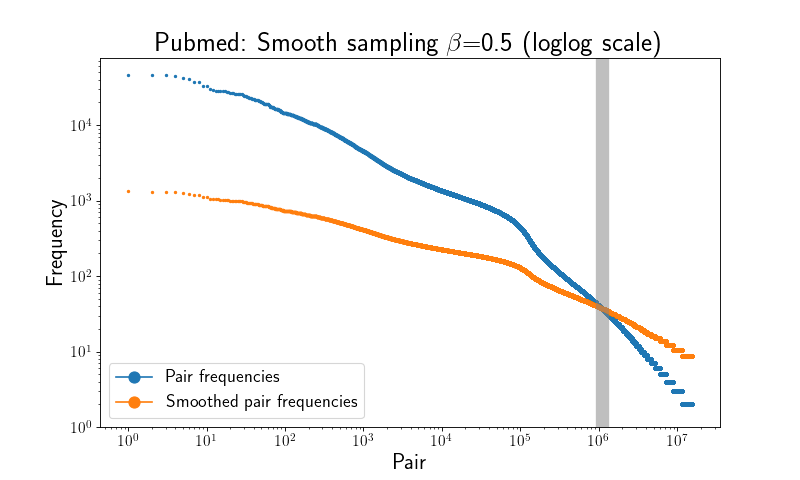}
\includegraphics[width=60mm]{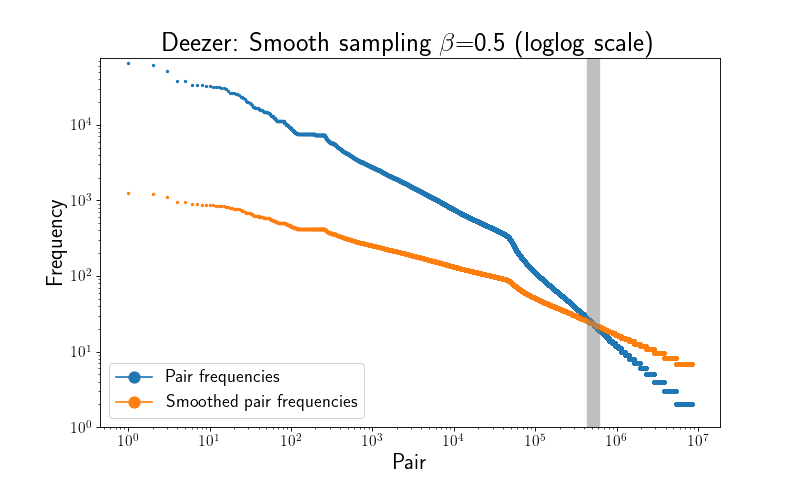}
\\
\includegraphics[width=60mm]{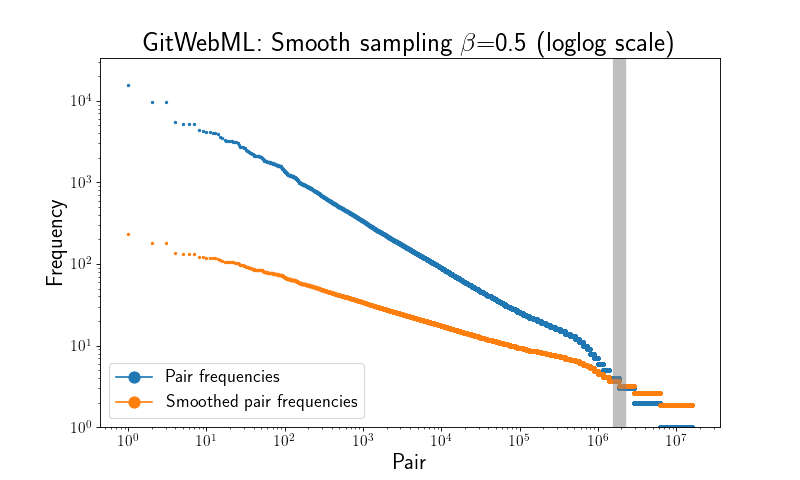}
\includegraphics[width=60mm]{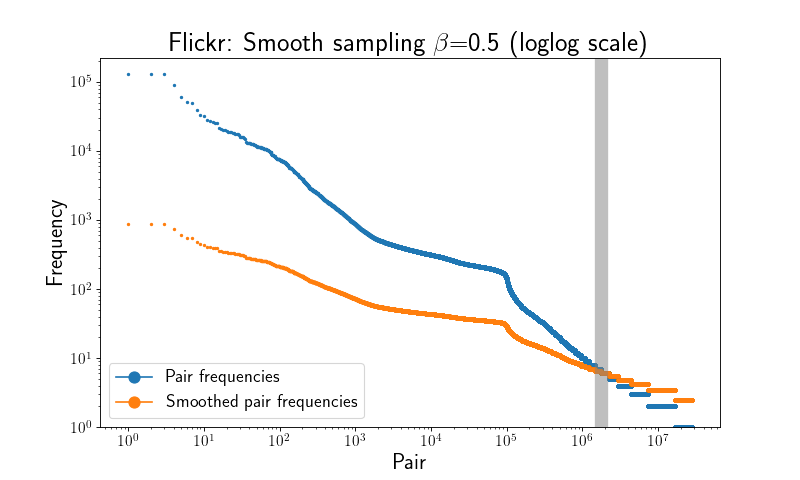}
\\
\includegraphics[width=60mm]{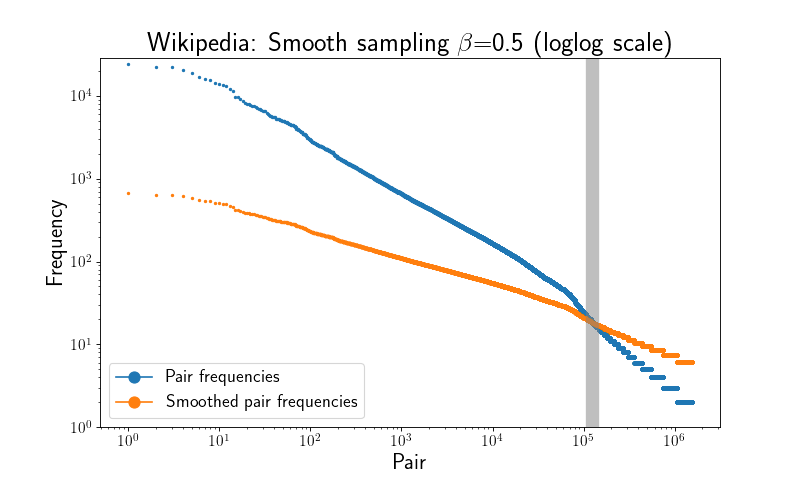}
\includegraphics[width=60mm]{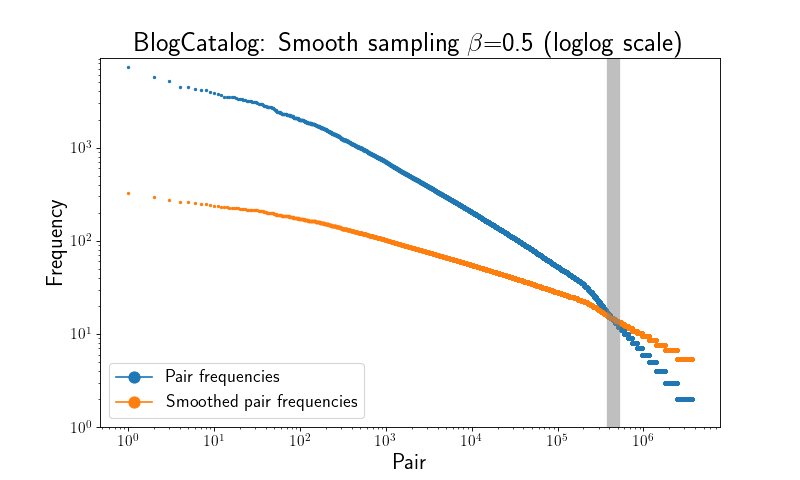}
\caption{The effect of smoothing the pair frequencies for the other six graphs (in addition to Figure~\ref{fig:smoothing_effect} in the main paper.). The grey vertical line shows the transition point after which smoothing leads to more positive samples for the corresponding pairs.}
\label{fig:app_smoothing_effect}
\end{figure*}

\paragraph{Skewed pair frequency distribution and the effect of smoothing.}
The original motivation of the paper is that the frequency distribution of positive pairs generated from random walk sequences is highly skewed and this can negatively affect the quality of the learned embeddings. 
Figure~\ref{fig:app_smoothing_effect} shows that the frequency distribution of positive pairs generated by DeepWalk is indeed highly skewed for all graphs (in addition to Figure~\ref{fig:smoothing_effect} in the main paper). In Table~\ref{tab:app_skew} we show that a small fraction of all positive pairs dominate the overall frequency distribution. Except for the very sparse Flickr, we see that for the other seven graphs 5\% of the unique pairs generate more than 50\% of the total number of pairs.

\paragraph{Pair frequency distribution for node2vec random walk corpora}
In Figure~\ref{fig:app_smoothing_effect_n2v} we show that node2vec also results in a highly skewed pair frequency distribution. Note that we use the values $p=4, q=0.25$ which results in the least skewed frequency distribution, yet it still follows a power law.

\begin{figure}[t!]
\centering
\includegraphics[width=60mm]{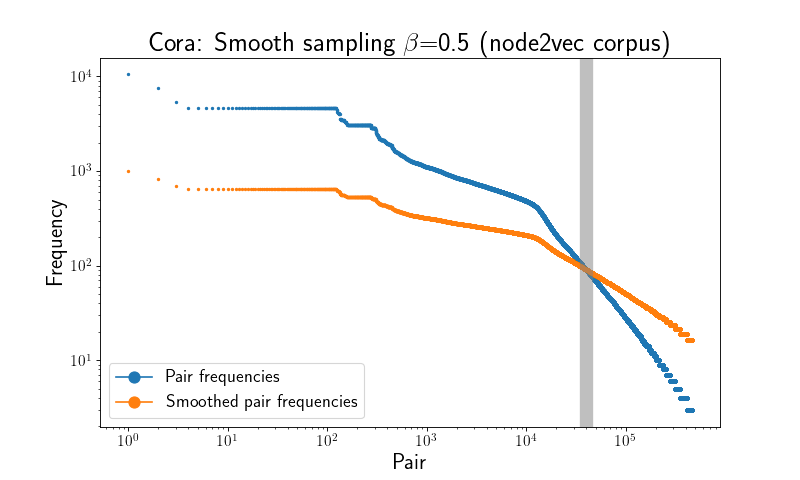}
\includegraphics[width=60mm]{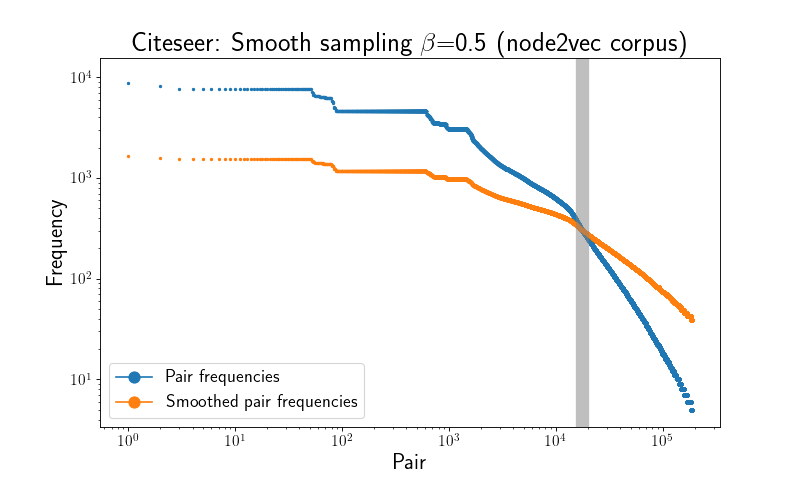}
\includegraphics[width=60mm]{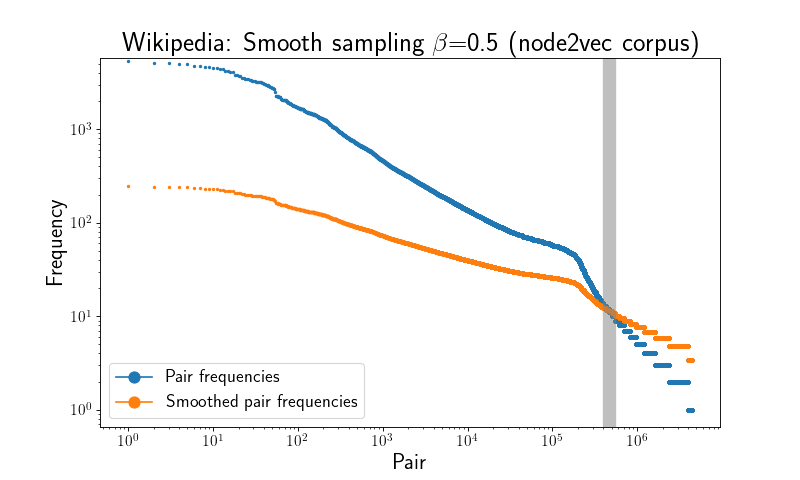}
\includegraphics[width=60mm]{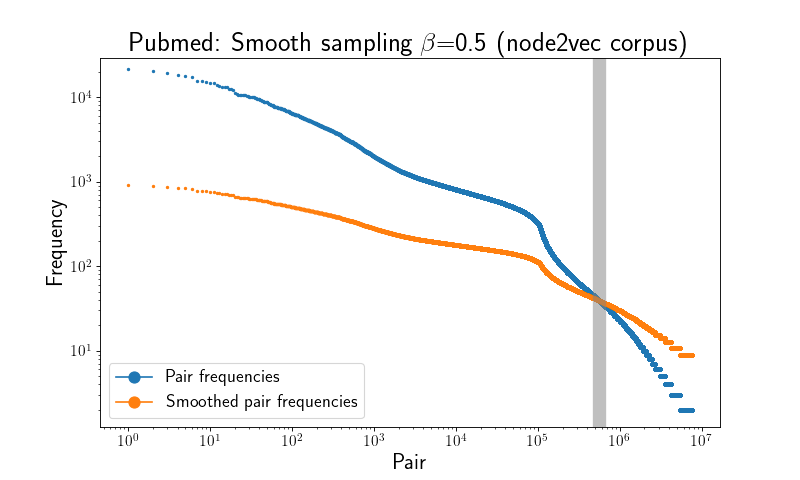}
\caption{The effect of smoothing the pair frequencies for node2vec($p=4, q=0.25$) corpora.}
\label{fig:app_smoothing_effect_n2v}
\end{figure}

\paragraph{Statistically significant positive pairs}
Following the discussion in Section~\ref{sec:why}, we next show the effect frequency smoothing has on the embedding learning process.
For the node pairs recorded in the sketch, using a budget of 10\% of all unique pairs $P$, we computed the fraction of pairs for which the difference between positive and negative samples is statistically significant at level $0.05$ according to a binomial test at probability 2/3, i.e., the number of positive occurrences of a pair is at least twice the number of negative occurrences. We argue that for such pairs the embedding algorithm learns to preserve the similarity between the corresponding nodes. As we see in Figure~\ref{fig:app_stat_sig} for Cora, Citeseer and Pubmed, by decreasing $\beta$ the fraction of significant pairs converges to 100\% which confirms the discussion in Section~\ref{sec:why} that smoothing serves as a frequency regularizer. 
\begin{figure}[h!]
\centering
\includegraphics[width=60mm]{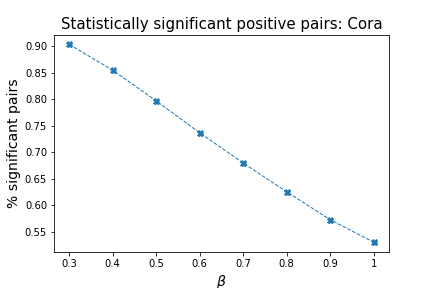}
\includegraphics[width=60mm]{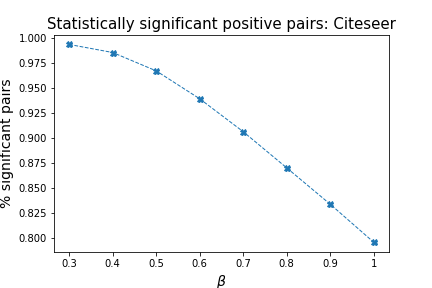}
\includegraphics[width=60mm]{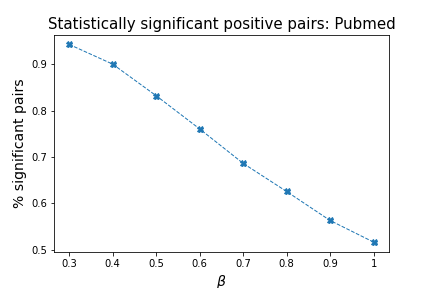}
\caption{Percentage of statistically significant positive pairs among pairs recorded in the sketch for varying~$\beta$.}
\label{fig:app_stat_sig}
\end{figure}

\subsection{Performance of SmoothDeepWalk} \label{sec:app_hyperparameters}
{\bf Running time.} In Figure~\ref{fig:app_rtimes} we show the effect smoothing has on the training time. We analyze the increase in running time depending on the smoothing level $\beta$ and the sketch budget $b$. We remind that a smaller sketch budget leads to a higher overestimation of pair frequencies not recorded in the sketch, and thus lower $\beta$ and~$b$ decrease the sampling probability and increase the total running time. We refer to Table~\ref{tab:app_neg_smooth} for the exact values for all graphs.   For the smaller graphs the increase is more significant. We conjecture the reason is the amount of fast CPU cache such that for larger graphs we observe more cache misses and the computational bottleneck is the embedding training by the binary classification model. Such considerations however are beyond the scope of the paper. 
\begin{figure}[t]
\centering
\includegraphics[width=60mm]{rtime_varying_beta.png}
\includegraphics[width=60mm]{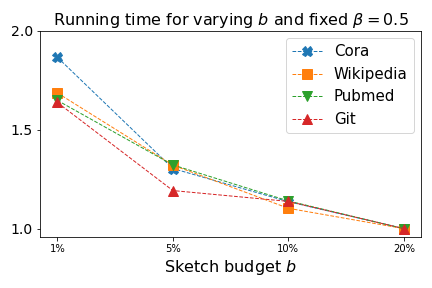}
\caption{Running time increases for various smoothing levels $\beta$ (left) and sketch budgets $b$ (right).}
\label{fig:app_rtimes}
\end{figure}

{\bf Link prediction.}
We use the experimental setting proposed in~\cite{hope} for link prediction. First, we delete 20\% of the graph edges at random such that the graph remains connected. Then we train node embeddings on the reduced graph. We sample 0.1\% of all $n \choose 2$ node pairs and 0.1\% of the removed edges. For the $k$ pairs with  the largest inner product among the sampled pairs we compute precision@$k$ and recall@$k$. We compute the mean of 100 independent trials, and we set a unique random seed for each of the 100 trials. 

The sparsity and low clusterability of Flickr makes link prediction much more challenging. However, independent of the clusterability structure and density, smoothing yields substantial gains for all six larger graphs apart from the two smaller graphs Cora and Citeseer.  But it should be mentioned that for dense graphs the gains are smaller. 
 
 \begin{figure*}[t]
 \centering
\includegraphics[width=60mm]{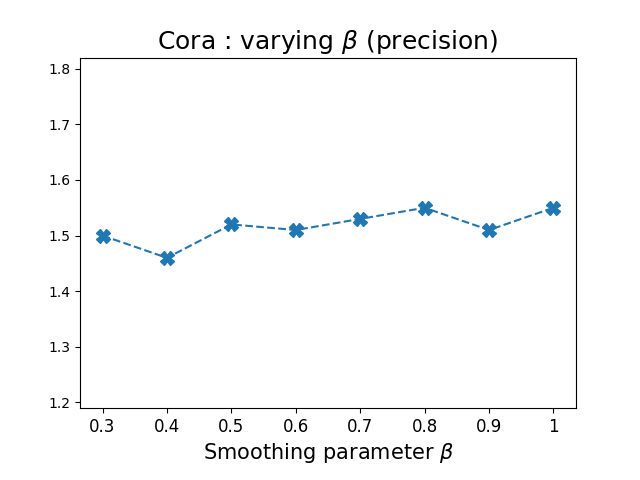}
\includegraphics[width=60mm]{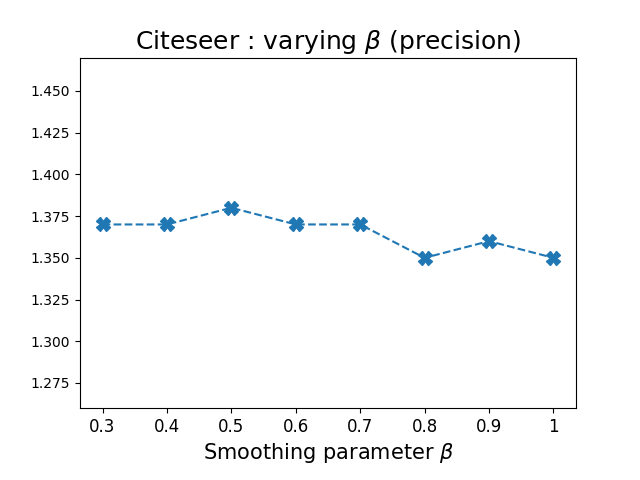} 
\\
\includegraphics[width=60mm]{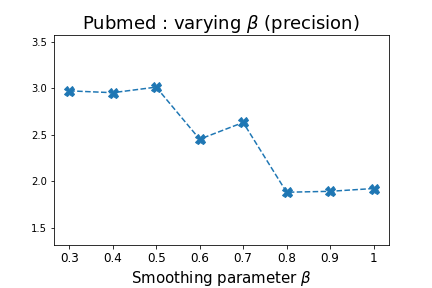}
\includegraphics[width=60mm]{Git_prec_linkpred_varying_beta.png} 
\\
\includegraphics[width=60mm]{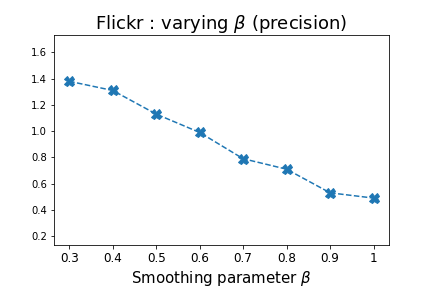}
\includegraphics[width=60mm]{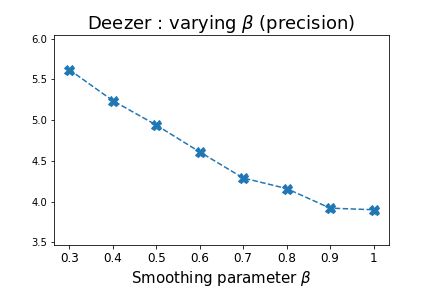}
\\
\includegraphics[width=60mm]{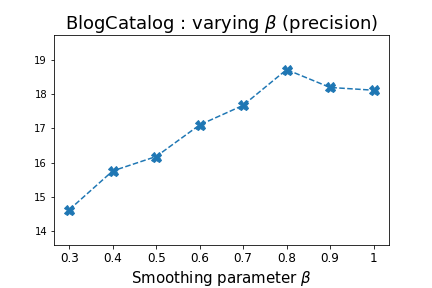}
\includegraphics[width=60mm]{Wikipedia_prec_linkpred_varying_beta.png}
\caption{The effect of varying $\beta$ for link prediction.}
\label{fig:app_beta_linkpred}
\end{figure*}

\begin{figure*}[t]
\centering
\includegraphics[width=48mm]{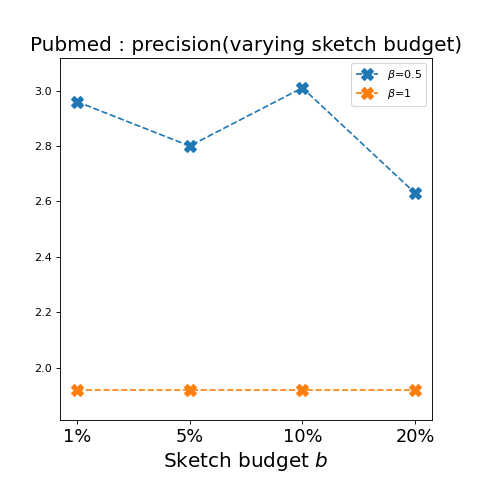} 
\includegraphics[width=48mm]{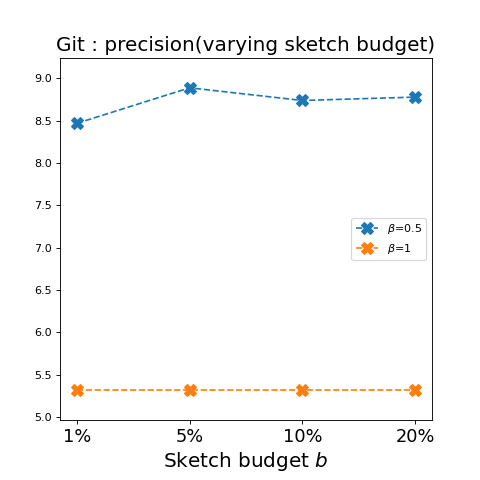} 
\includegraphics[width=48mm]{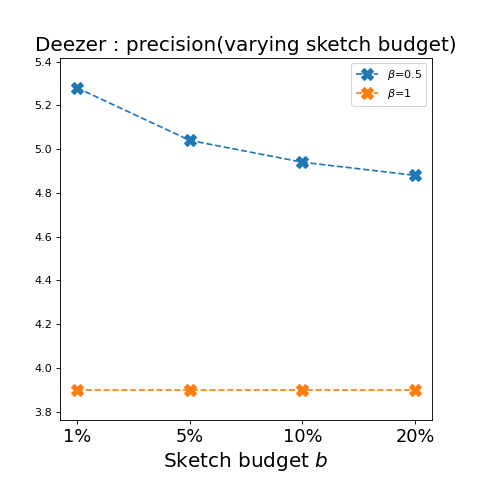} 
\\
\includegraphics[width=48mm]{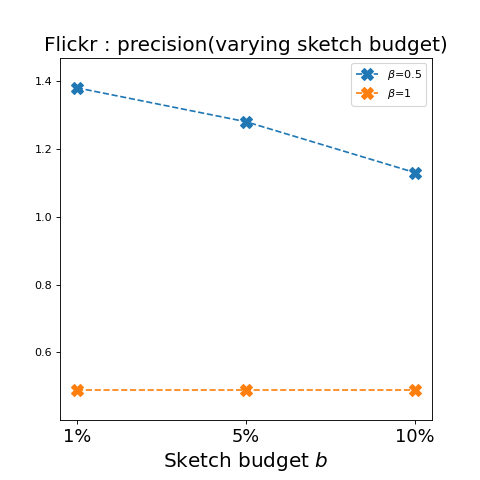} 
\includegraphics[width=48mm]{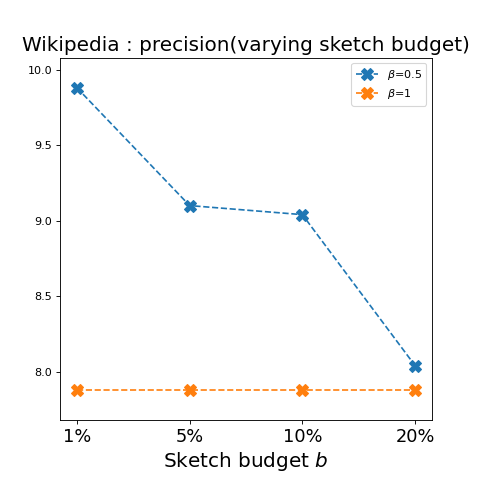} 
\includegraphics[width=48mm]{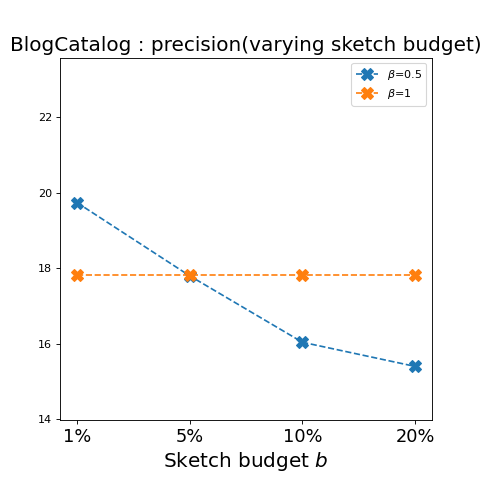} 
\caption{The effect of varying the sketch  budget $b$ for link prediction.}
\label{fig:app_sketchsize_linkpred_app} 
\end{figure*}

\paragraph{Node classification.} 
We train a logistic regression model with L2-regularization using the scikit-learn implementation. For the two multilabel classification problems, on Wikipedia and BlogCatalog, we train one-vs-rest classifiers using logistic regression as the base estimator\footnote{\scriptsize \url{https://scikit-learn.org/stable/modules/generated/sklearn.multiclass.OneVsRestClassifier.html}}.  
For each graph we run 100 independent runs by randomly sampling 10\% of the nodes for training and using the the rest 90\% for testing using scikit-learn's \texttt{train\_test\_split} function. We set a random seed for each of the 100 runs such that different algorithms work with the same train and test datasets by providing a random seed.  We report the mean score of the result. When comparing SmoothDeepWalk and DeepWalk we report results statistically significant according to a $t$-test at significance level 0.01. Results where the difference between the two methods is not statistically significant are shown in gray.

The improvements are consistent also for larger train sizes, see Figures~\ref{fig:app_clf_var_train_size_macro} and~\ref{fig:app_clf_var_train_size_micro} for macro-F1 and micro-F1 scores, respectively.

%

\paragraph{Node classification for low and high degree nodes}
 
It is reasonable to assume that low-degree nodes will be involved in less random walks than high-degree nodes. A natural question to ask is if smoothing improves the representation of such underrepresented nodes. In Figure~\ref{fig:app_low_high} we show the improvement in percentage of macro-F1 scores of SmoothDeepWalk over DeepWalk when training a model by considering only the lowest/highest degree nodes. After sorting the nodes according to their degree, we select the $t\%$ nodes of highest/lowest degree. Note that for 100\% we consider all nodes. We observe that we indeed improve the representation for low degree nodes, and never get worse results when considering only high degree nodes. This confirms the objective of smoothing, the emergence of ``middle class'' pairs by introducing a progressive tax-like regularization.  

{\bf The effect of $\beta$ and $b$.}  We evaluate SmoothDeepWalk for $\beta \in [0.3, \ldots, 1]$, $b=10\%$. 
In Figures~\ref{fig:app_beta_linkpred} and \ref{fig:app_beta}  we show the effect the smoothing parameter $\beta$ has on link prediction and node classification. The plots show the two main observations discussed in the main paper: 
\begin{enumerate}
\item The accuracy values have a functional dependency on $\beta$, with some small fluctuations. This opens the door to efficient approaches to hyperparameter optimization. 
\item For graphs with a low clustering coefficient, lower values of $\beta$ are beneficial, while for graphs with a high clustering too aggressive smoothing can be harmful. 
\end{enumerate}

%
%

Similarly, we  fix $\beta=0.5$ and evaluate SmoothDeepWalk for sketch budgets $b \in [1\%, 5\%, 10\%, 20\%]$ of the (estimated) number of unique pairs $P$. We remind that a smaller budget leads to a larger overestimation of the frequency of infrequent pairs which in turn decreases the sampling probability and corresponds to more aggressive filtering of infrequent pairs. Thus, $b$ and $\beta$ both influence which pairs are considered important.
In Figures~\ref{fig:app_sketchsize_linkpred_app} and~\ref{fig:app_clf_var_budget} we show the effect of varying the sketch budget $b$ for link prediction and on node classification. It appears the budget has little influence on node classification but smaller budgets, and thus more aggressive filtering of the most infrequent pairs, can have a positive effect for the denser graphs. (Note that we don't present results for Flickr with budget=20\% because we run out of memory.)

\begin{figure}
\centering
\includegraphics[width=57mm]{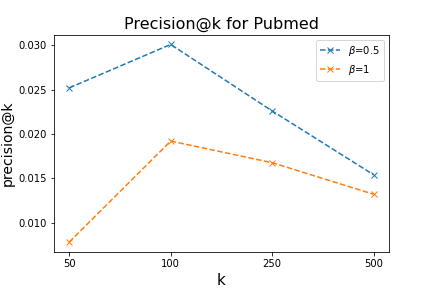} 
\includegraphics[width=57mm]{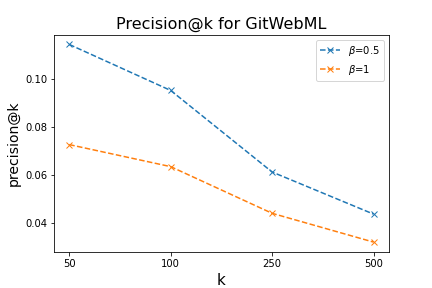} 
\includegraphics[width=57mm]{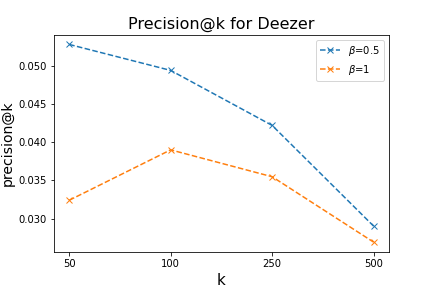} 
\includegraphics[width=57mm]{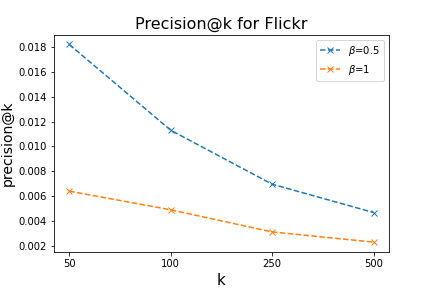} 
\includegraphics[width=57mm]{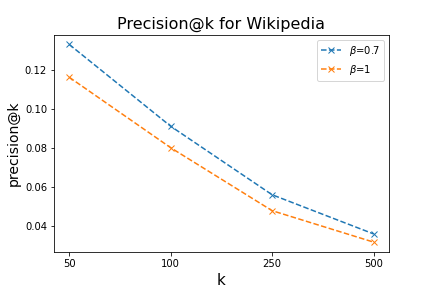} 
\includegraphics[width=57mm]{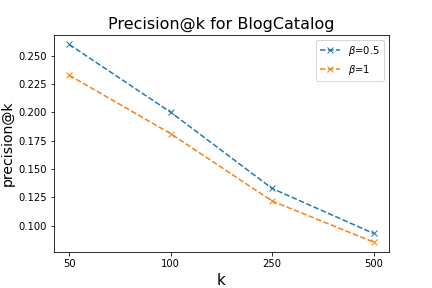} 
\caption{Precision@$k$ scores for varying $k$ for link prediction.}
\label{fig:app_prec_at_k}
\end{figure} 

\begin{figure}
\includegraphics[width=57mm]{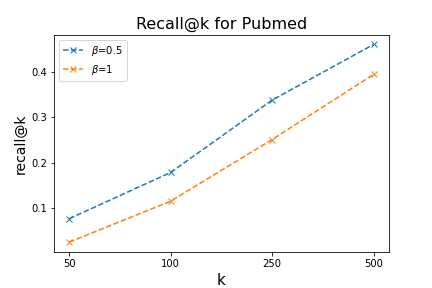} 
\includegraphics[width=57mm]{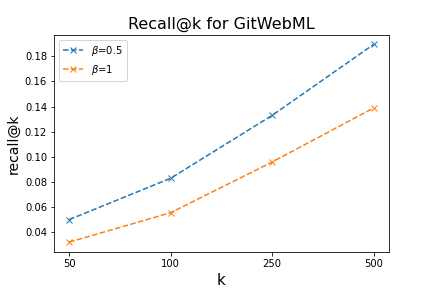} 
\includegraphics[width=57mm]{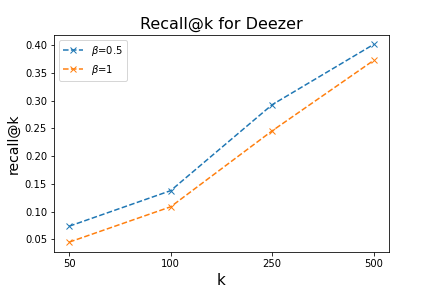} 
\includegraphics[width=57mm]{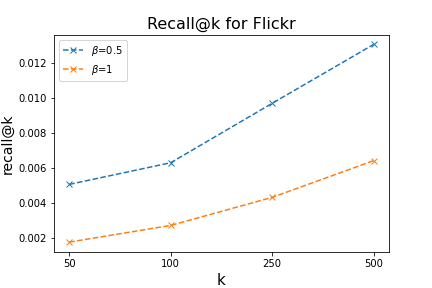} 
\includegraphics[width=57mm]{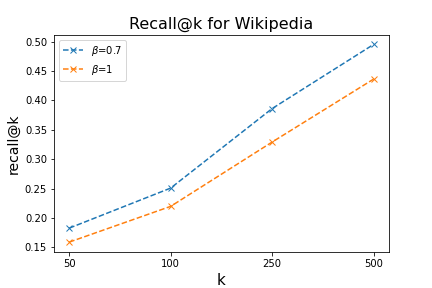} 
\includegraphics[width=57mm]{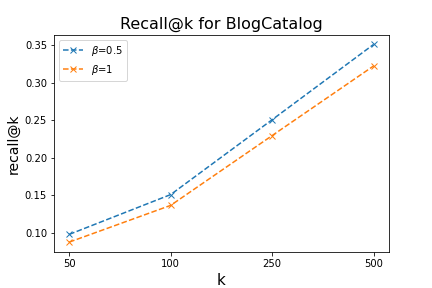} 
\caption{Recall@$k$ scores for varying $k$ for link prediction.}
\label{fig:app_recall_at_k}
\end{figure}

\begin{figure*}[h!]
\centering
\includegraphics[width=60mm]{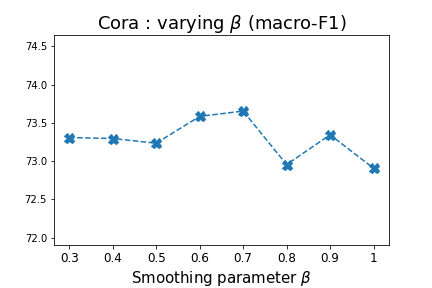} 
\includegraphics[width=60mm]{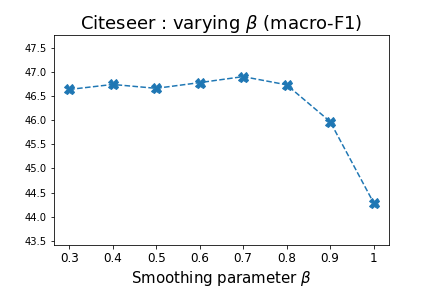}
\\
\includegraphics[width=60mm]{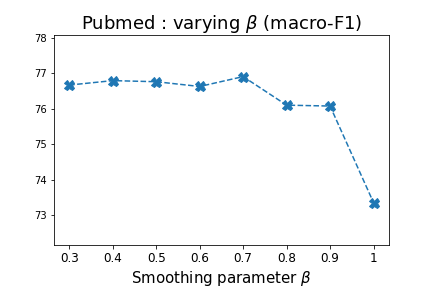}
\includegraphics[width=60mm]{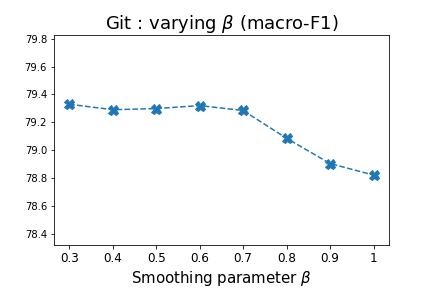}
\\
\includegraphics[width=60mm]{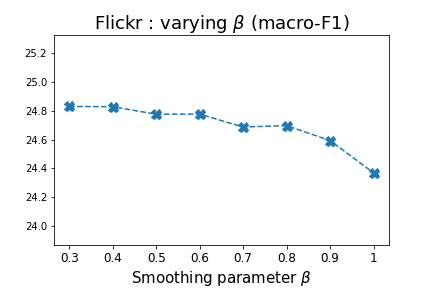} 
\includegraphics[width=60mm]{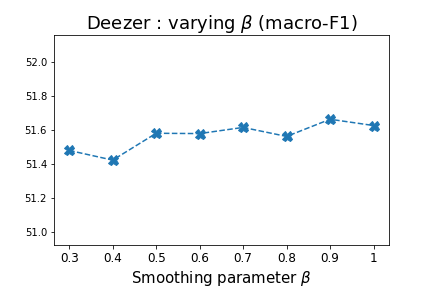} 
\\
\includegraphics[width=60mm]{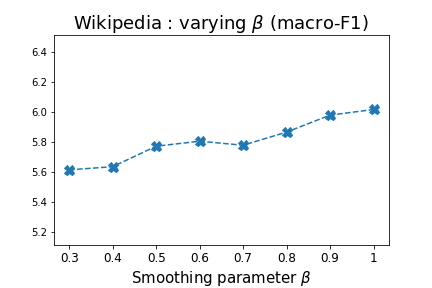}
\includegraphics[width=60mm]{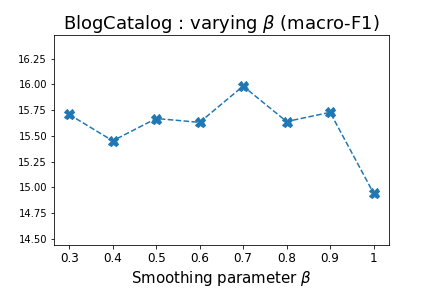}
\caption{The effect of varying $\beta$ for node classification.}
\label{fig:app_beta}
\end{figure*}

\begin{figure*}[t!]
\centering
\includegraphics[width=57mm]{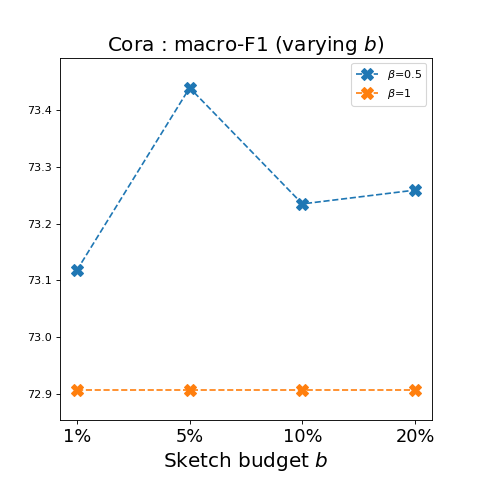}
\includegraphics[width=57mm]{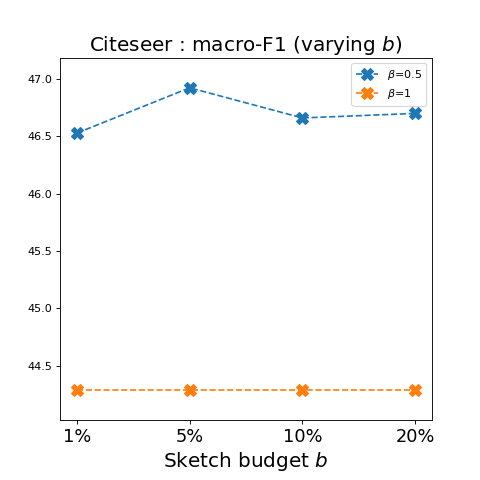}
\\
\includegraphics[width=57mm]{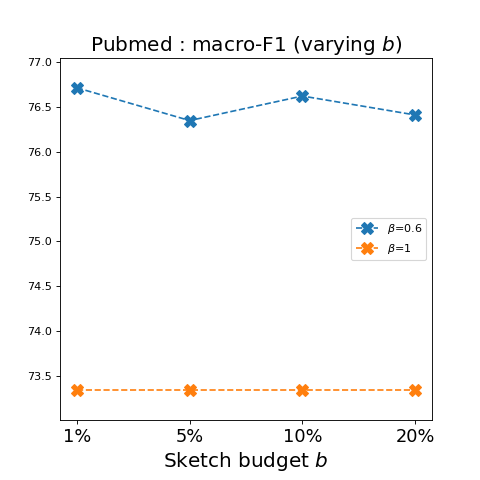}
\includegraphics[width=57mm]{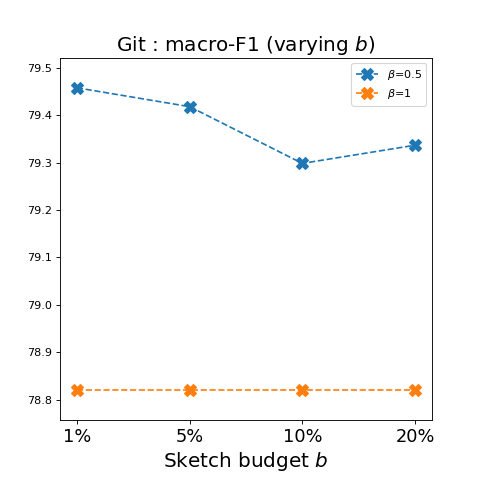}
\\
\includegraphics[width=57mm]{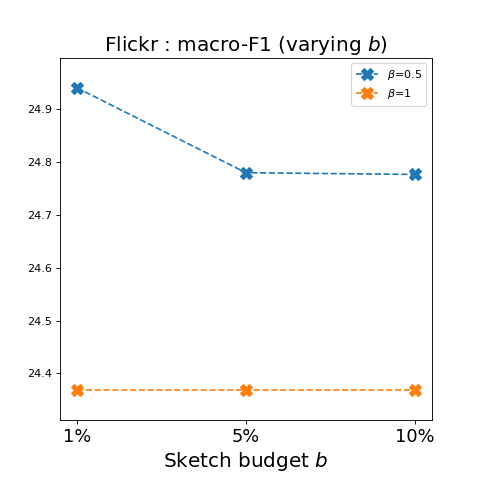}
\includegraphics[width=57mm]{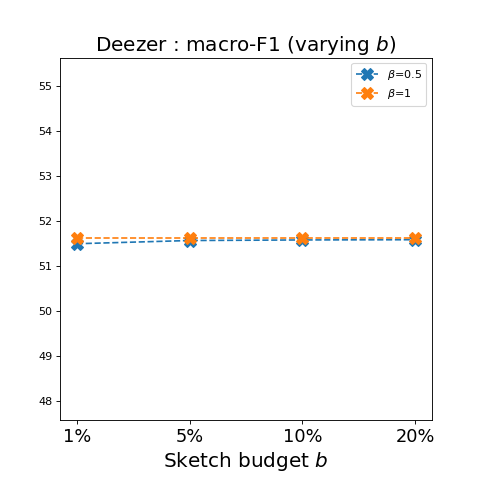}
\\
\includegraphics[width=57mm]{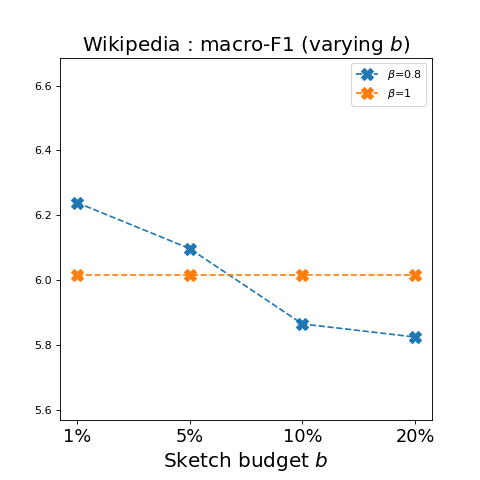}
\includegraphics[width=57mm]{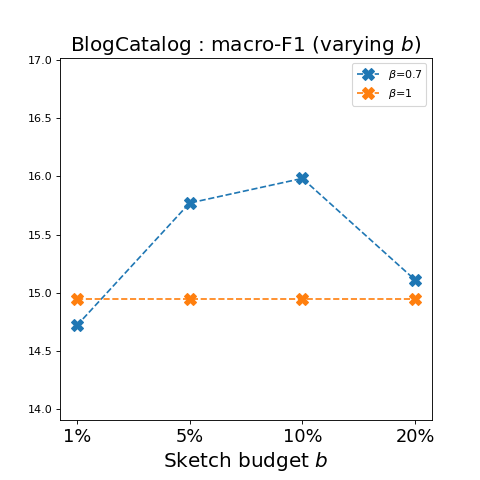}
\caption{The effect of varying the sketch  budget $b$ for node classification.}
\label{fig:app_clf_var_budget}
\end{figure*}

\begin{figure*}[t!]
\centering
\includegraphics[width=57mm]{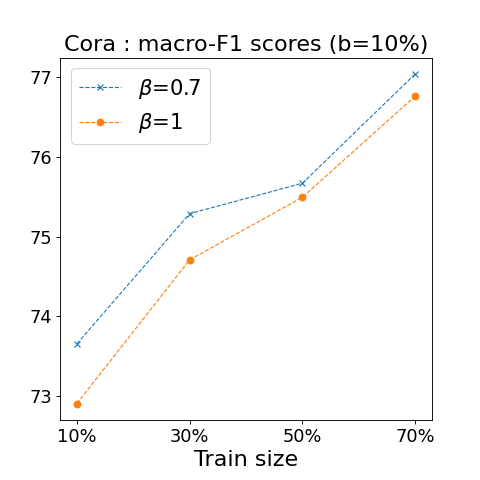}
\includegraphics[width=57mm]{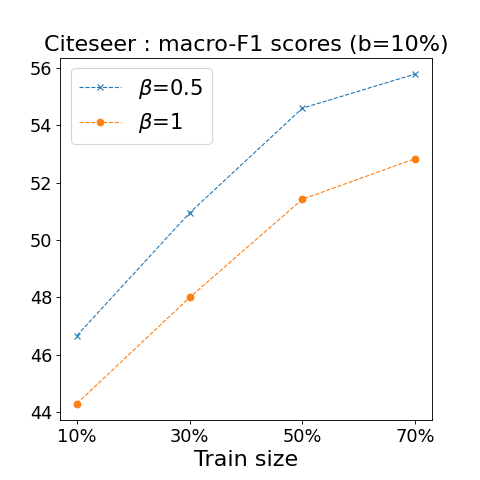}
\\
\includegraphics[width=57mm]{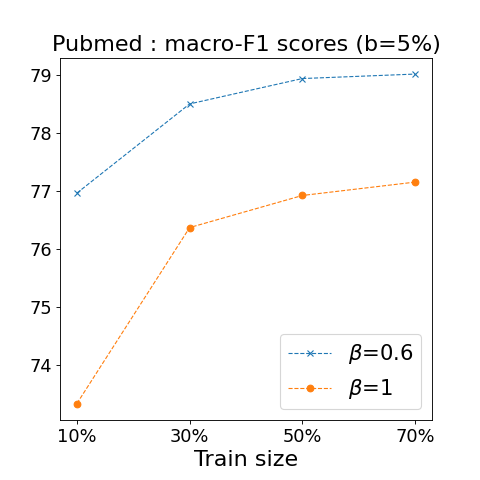} 
\includegraphics[width=57mm]{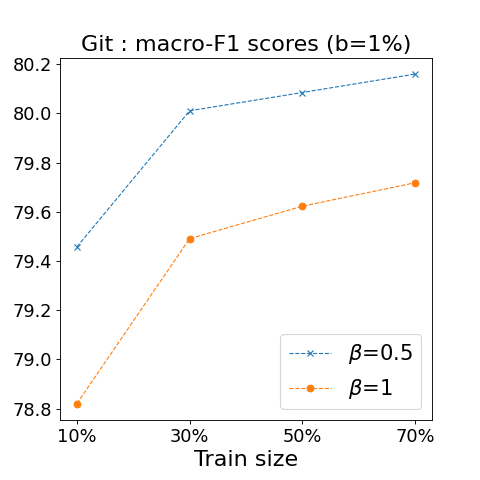} 
\\
\includegraphics[width=57mm]{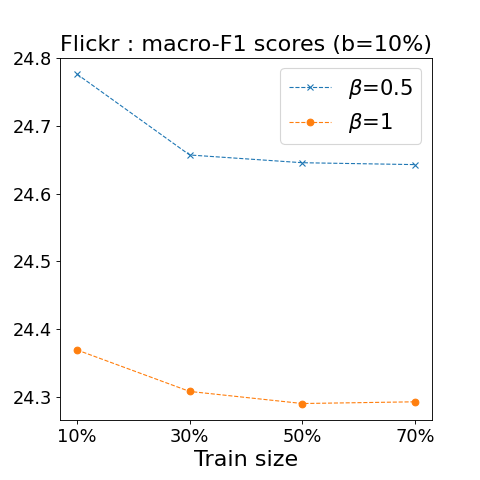}
\includegraphics[width=57mm]{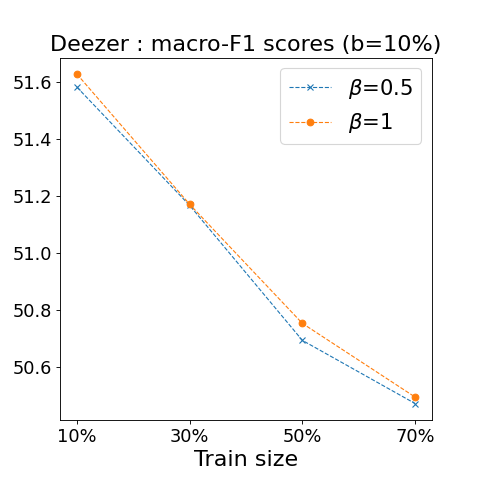}
\\
\includegraphics[width=57mm]{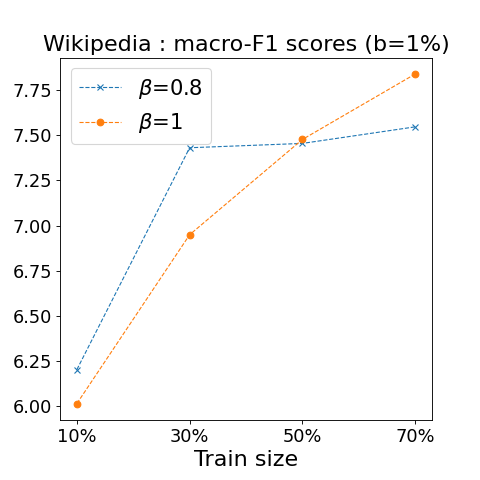} 
\includegraphics[width=57mm]{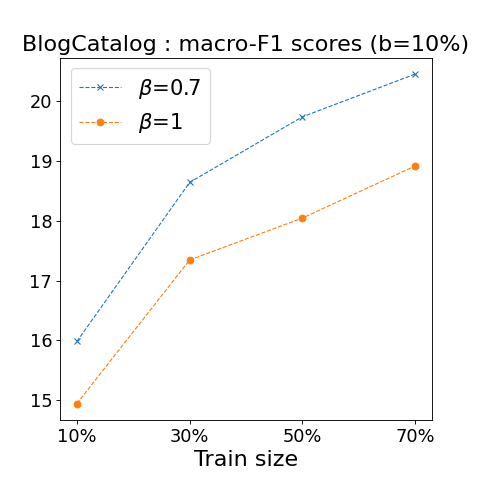} 
\caption{Macro-F1 scores for node classification for varying train sizes.}
\label{fig:app_clf_var_train_size_macro}
\end{figure*}

\begin{figure*}[t!]
\centering
\includegraphics[width=57mm]{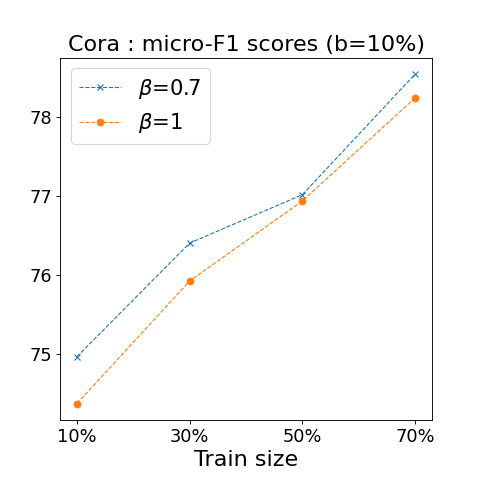}
\includegraphics[width=57mm]{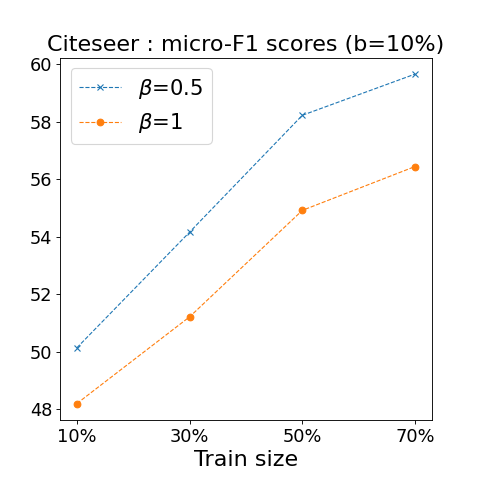}
\\
\includegraphics[width=57mm]{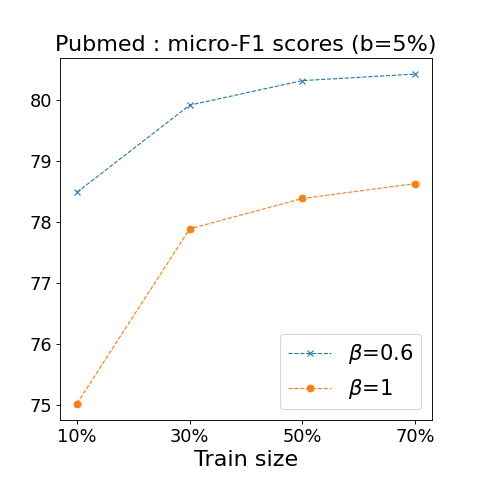} 
\includegraphics[width=57mm]{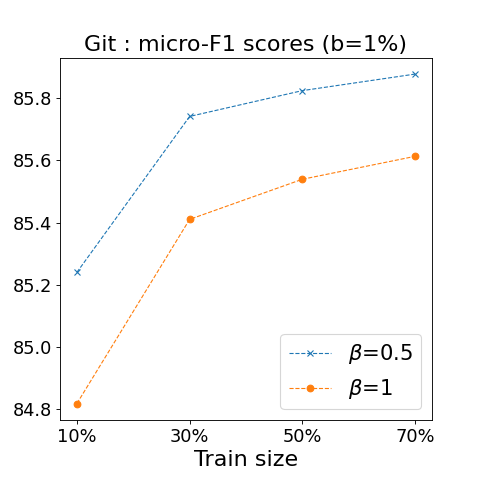} 
\\
\includegraphics[width=57mm]{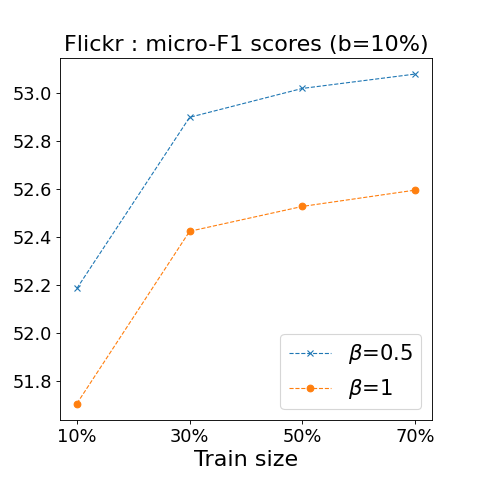}
\includegraphics[width=57mm]{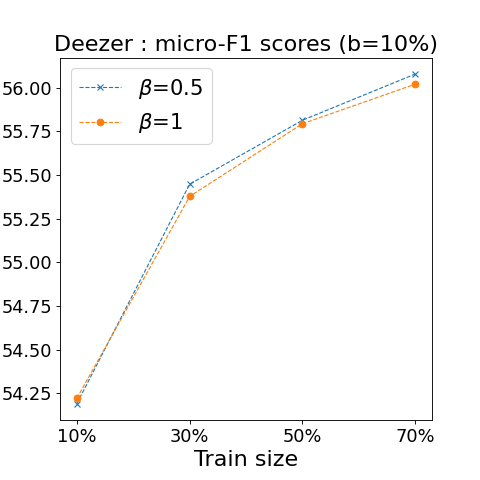}
\\
\includegraphics[width=57mm]{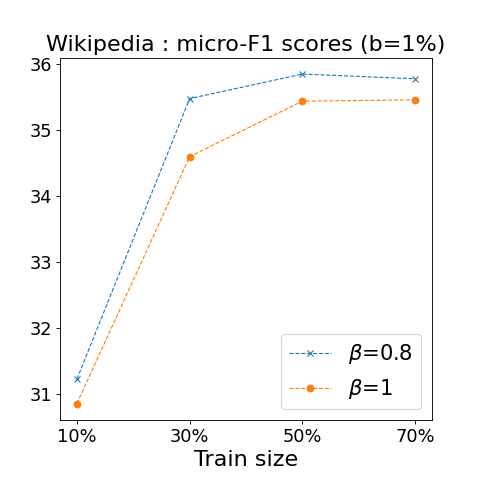} 
\includegraphics[width=57mm]{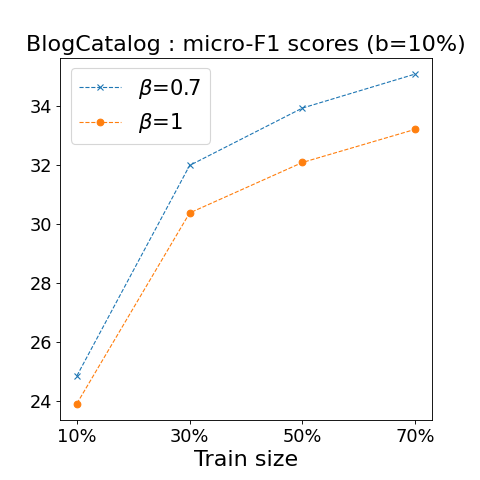} 
\caption{Micro-F1 scores for node classification for varying train sizes.}
\label{fig:app_clf_var_train_size_micro}
\end{figure*}

\begin{figure*}[h!]
\centering
\includegraphics[width=60mm]{Cora_low_degree_clf.png} 
\includegraphics[width=60mm]{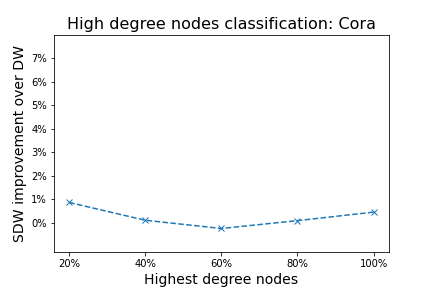} 
\\
\includegraphics[width=60mm]{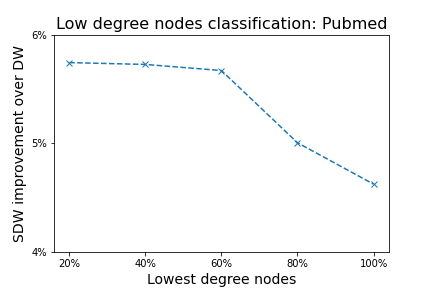} 
\includegraphics[width=60mm]{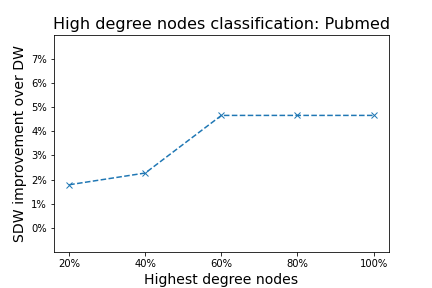}
\\ 
\includegraphics[width=60mm]{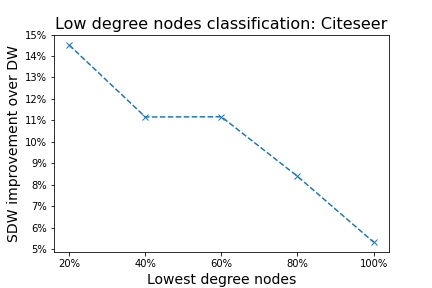} 
\includegraphics[width=60mm]{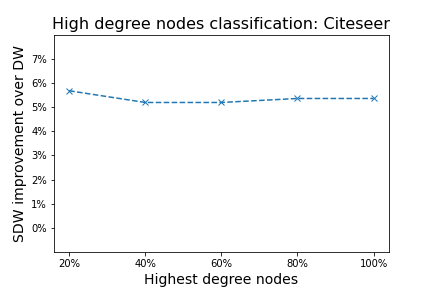}
\\ 
\includegraphics[width=60mm]{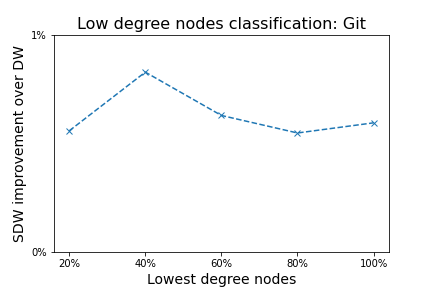} 
\includegraphics[width=60mm]{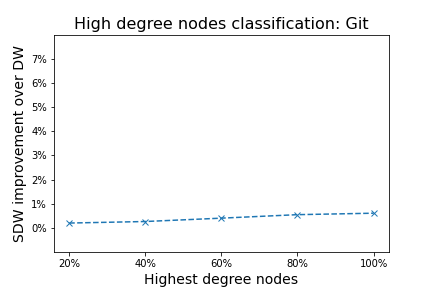} 
\caption{Macro-F1 scores for node classification for low and high degree nodes.}
\label{fig:app_low_high}
\end{figure*}


\subsection{SmoothNode2Vec} \label{sec:app_n2v}
Figure~\ref{fig:app_n2v_linkpred} shows the performance of SmoothNode2Vec for link prediction with the recommended hyperparameters for $\beta$ and $b$ for different values of the hyperparameters $p$ and $q$. We again observe significant improvements for the large sparse graphs, and smoothing again fails to yield any noticeable improvement for Cora and Citeseer. For node classification, as shown in Figure~\ref{fig:app_n2v_nodeclf}, smoothing considerably improves the accuracy in cases when node2vec performs poorly, otherwise the gains are incremental yet statistically significant. We observe that outward-biased random walks, i.e., large $p$ and small $q$, yield stronger performance for most graphs and we thus use $p=4, q=0.25$ as default values. 

\begin{figure*}[h!]
\centering
\includegraphics[width=60mm]{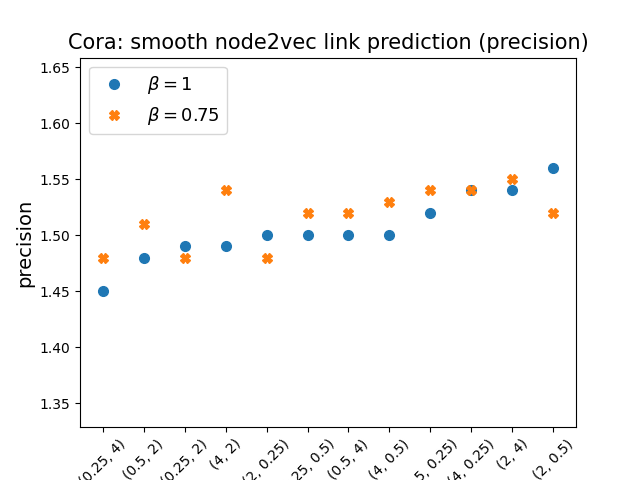} 
\includegraphics[width=60mm]{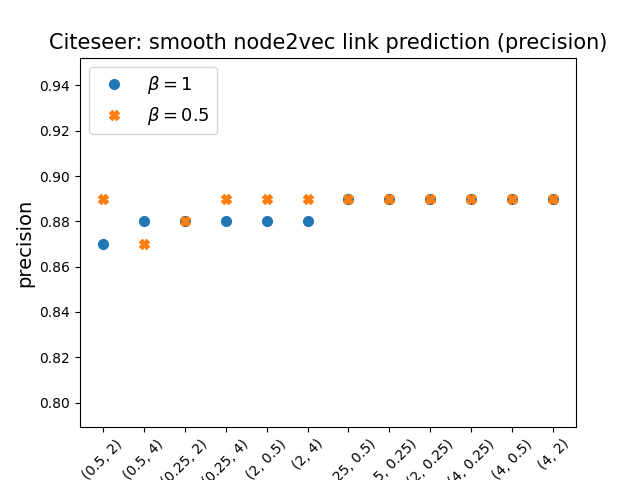} 
\\
\includegraphics[width=60mm]{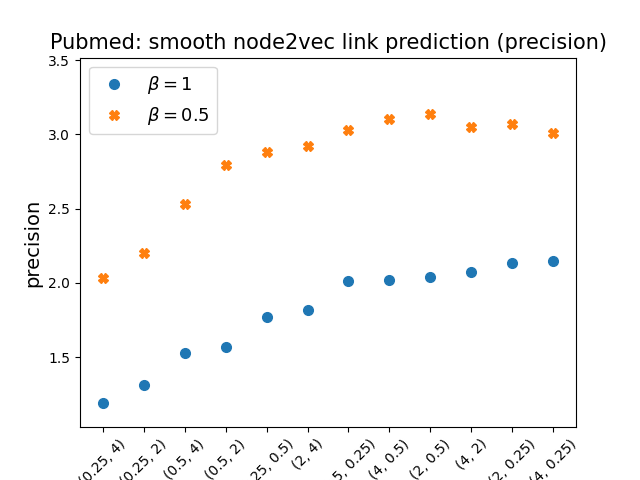} 
\includegraphics[width=60mm]{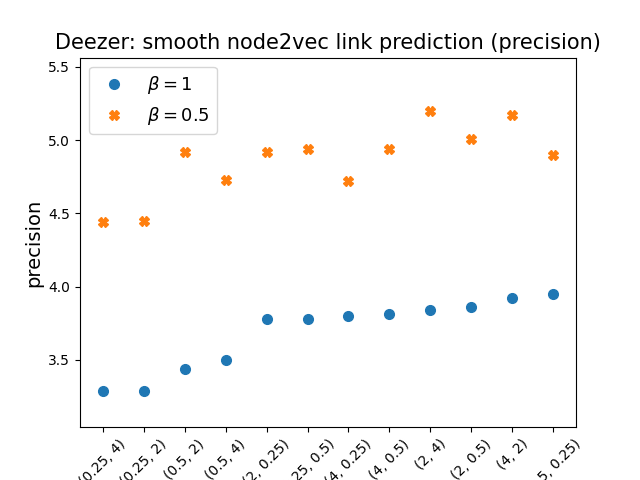}
\\ 
\includegraphics[width=60mm]{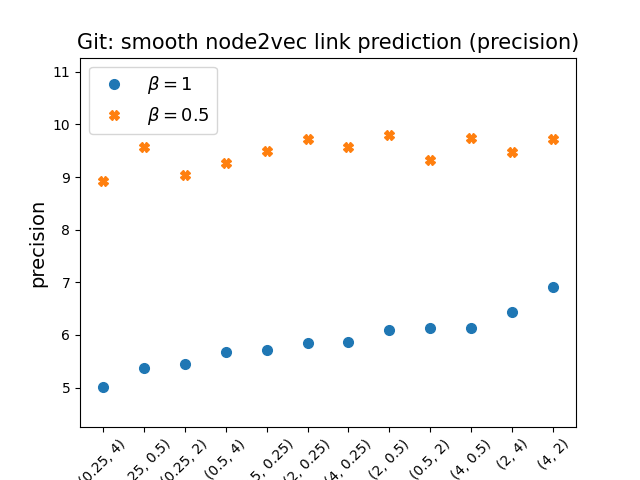} 
\includegraphics[width=60mm]{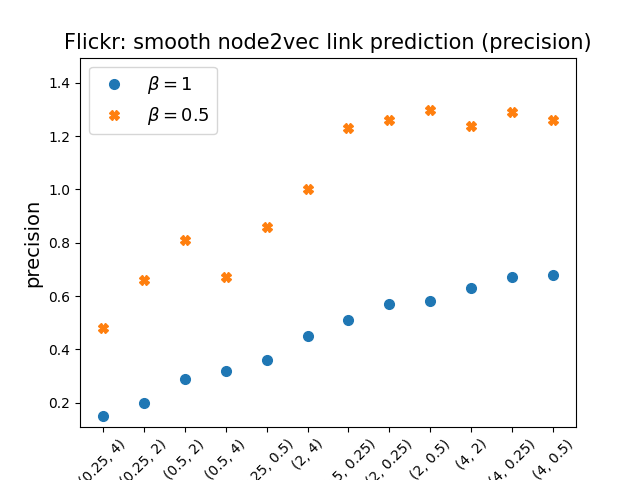}
\\ 
\includegraphics[width=60mm]{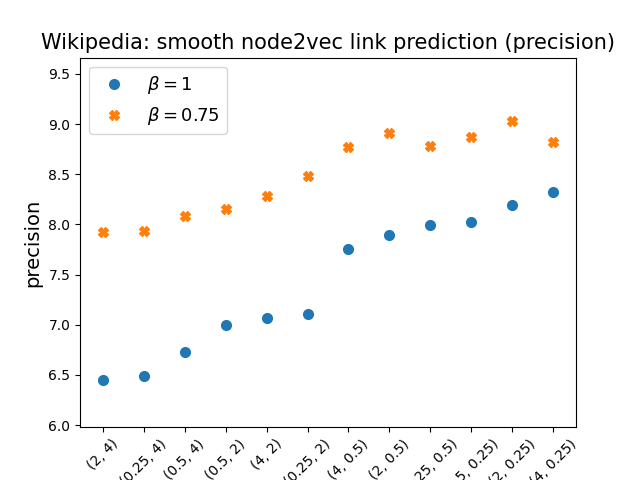} 
\includegraphics[width=60mm]{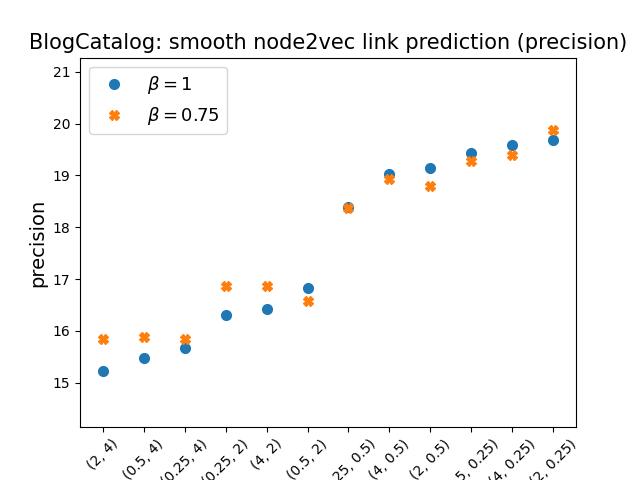} 
\caption{Precision@100 scores for SmoothNode2Vec($p, q$) for varying $p$ and $q$.}
\label{fig:app_n2v_linkpred}
\end{figure*}

\begin{figure*}[h!]
\centering
\includegraphics[width=60mm]{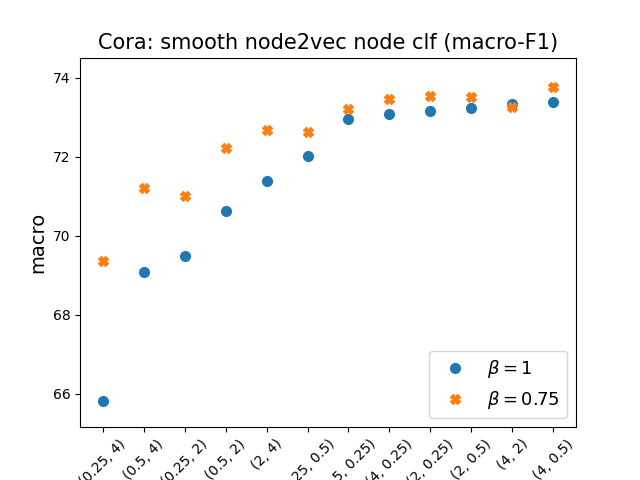} 
\includegraphics[width=60mm]{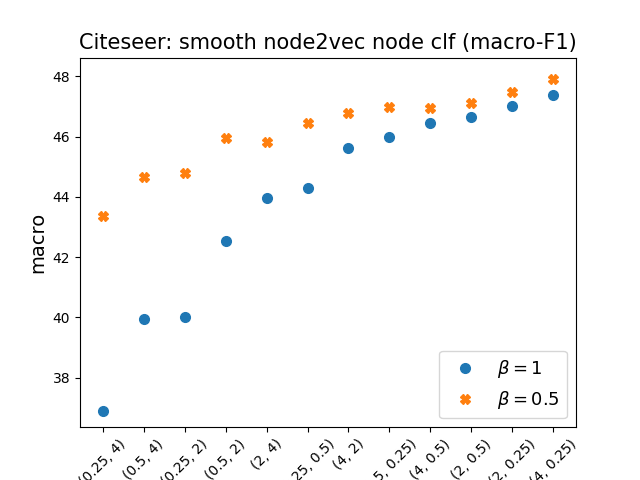} 
\\
\includegraphics[width=60mm]{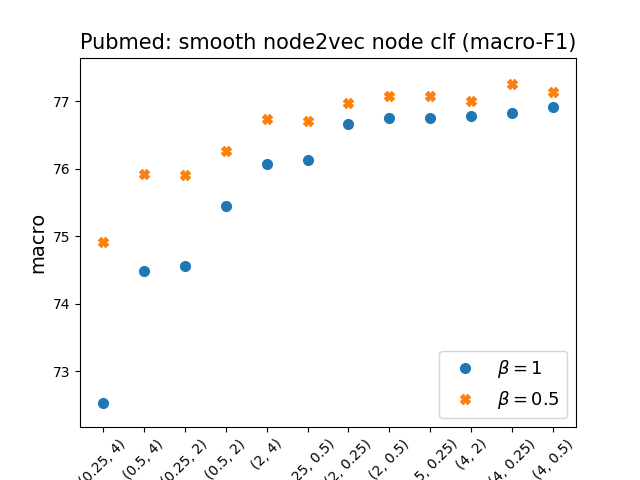} 
\includegraphics[width=60mm]{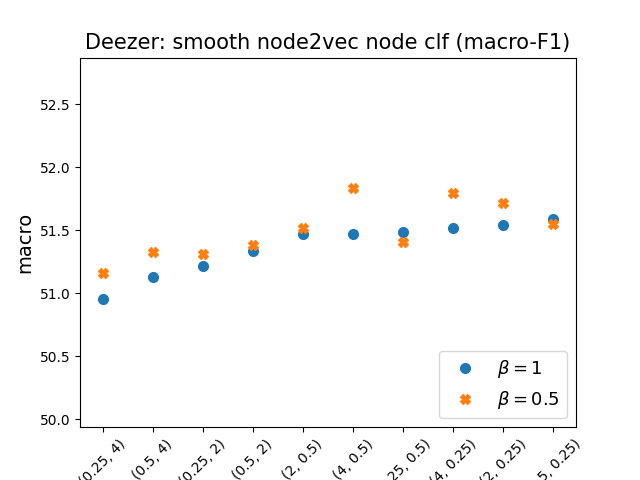}
\\ 
\includegraphics[width=60mm]{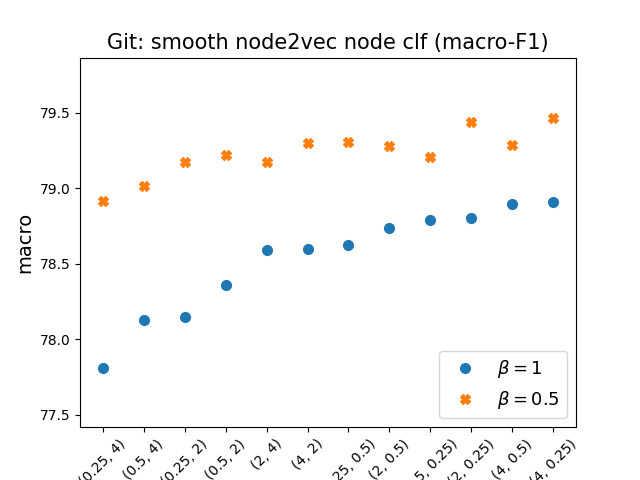} 
\includegraphics[width=60mm]{linkpred_Flickr_precision.png}
\\ 
\includegraphics[width=60mm]{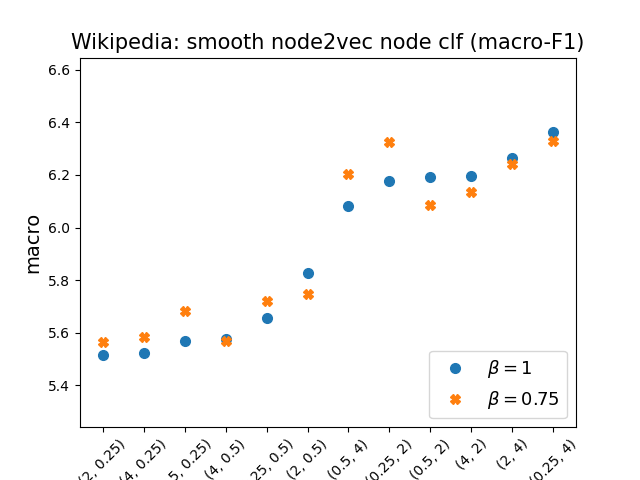} 
\includegraphics[width=60mm]{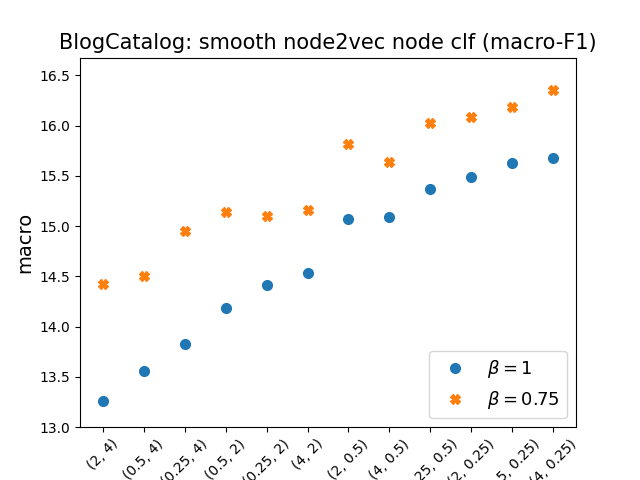} 
\caption{Macro-F1 scores for SmoothNode2Vec($p, q$) for varying $p$ and $q$.}
\label{fig:app_n2v_nodeclf}
\end{figure*}

\subsection{Smooth GraphSage} \label{sec:app_graphsage}

We implemented the unsupervised version of the GraphSage algorithm~\cite{graphsage} and evaluate the effect of pair smoothing for node classification on the four smaller graphs. 

We use for GraphSage the same pair generator we use for SmoothDeepWalk and performed experiments in Google Colab notebook with a T4 GPU with 16 GB of memory. We consider GraphSage of depth 2, i.e., the 2-hop neighborhood of each node. We sample 15 1-hop neighbors and 10 2-hop neighbors from the local neighborhood and use mean aggregation for the neighboring nodes. As initial node features we use 32-dimensional vectors sampled from the 128-dimensional node embeddings learned by the standard DeepWalk algorithm. The learnable weight matrices have dimensions $W_1 \in \mathbb{R}^{64 \times 32}, W_2 \in \mathbb{R}^{128 \times 64}$, and we use LeakyReLU with $\alpha=0.3$ as a non-linear activation function.

 As we see in Table~\ref{tab:app_graphsage}, smoothing is to some degree beneficial in a deep learning setting but the gains are more modest. 
\begin{table*}[t!]
\centering
\caption{Node classification and link prediction scores for GraphSage and its smoothed version with the recommended values for $b$ and $\beta$. The gray color indicates the  difference is not statistically significant.}
\label{tab:app_graphsage}
\begin{tabular}{ll cccc}
 & & Cora & Citeseer & Pubmed & Wikipedia\\ 
\toprule
\multirow{3}{*}{Macro-F1}
& GraphSage  & 73.74 & 45.22 & 76.51 & 4.24\\
& smoothGraphSage & 74.66 & 46.37 & 76.92 & 4.85\\
\midrule
\multirow{3}{*}{Micro-F1}
& GraphSage  & 76.01 & 51.7 & 78.24 & 18.04\\
& smoothGraphSage & 76.61 & 52.73 & {\color{gray} 78.48} & 19.87\\
\bottomrule
\toprule
\multirow{3}{*}{Precision@100}
& GraphSage  & 1.62 & 0.83 & 2.02 & 7.92 \\
& smoothGraphSage & {\color{gray} 1.65} & {\color{gray}  0.83} & 2.17 & {\color{gray} 8.06}\\
\midrule
\multirow{3}{*}{Recall@100}
& GraphSage  & 73.31 & 53.67 & 11.96 & 21.51 \\
& smoothGraphSage & {\color{gray} 73.14} & {\color{gray} 54.33} & 12.68 & {\color{gray} 22.0}\\
\bottomrule
\end{tabular}
\end{table*}

\subsection{Comparison with other embedding methods} \label{sec:app_competitors}


In Table~\ref{tab:app_comp_clf} and Table~\ref{tab:app_comp_lp} we compare the performance of SmoothDeepWalk (SDW) to other approaches for node classification and link prediction, respectively. For a fair comparison, we use the default values for $\beta=0.5$ or $\beta=0.75$ and $b=10\%$. 
The aggregated scores reported in Table~\ref{tab:aggregate} in the main paper are based on the scores in these two tables.

A special attention deserves our direct competitor, the smooth negative sampling algorithm by Yang et al.~\cite{neg_smooth}, denoted as SNS. We copied the sampling function into our framework and set the negative smoothing parameter to 0.5 which corresponds to $\beta=0.5$ for SDW. The results are clearly worse than ours and the SNS sampling approach is very slow. SNS would need more than 32 hours for Git, and more than 48 hours for Flickr, a time increase of more than 30 times compared to SDW, see Table~\ref{tab:app_neg_smooth}. The adaptive negative sampling approach (NEPS)~\cite{adaptive_neg_sampling} penalizes sampling high degree nodes and can be seen a heuristic that achieves frequency smoothing. The running time increase is even larger as the approach needs time $O(n)$ to generate a positive sample. 

\begin{table*}[t]
\caption{Time increase factor for SmoothDeepWalk (SDW) with $\beta=0.5, b=10\%$ and Smooth Negative Sampling (SNS) with $\alpha=0.5$. The first row shows the running time for DeepWalk in seconds, and the next two rows show the time increase for SDW and SNS.}
\centering
\begin{tabular}{l cccccccc}
& Cora & Citeseer & Pubmed & Git & Deezer & Flickr & Wikipedia & BlogCatalog\\
\toprule
DW  & 335  & 456 & 2,330 & 3,606 & 5,288 & 24,125 & 480 & 1030 \\
SDW  & 2.8x & 2.9x & 2.1x & 1.7x & 1.6x & 1.2x & 1.7x & 1.9x\\
SNS & 66x & 65x & 38x & 40x & 42x & 41x & 57x & 48x\\
NEPS & 89x & 95x & 69x & 73x & 79x & 101x & 85x & 91x\\
\bottomrule
\end{tabular}
\label{tab:app_neg_smooth}
\end{table*}

We consider several representative time efficient embedding methods such as LINE~\cite{line},  NetMF~\cite{matfac}, Walklets~\cite{walklets}, Verse~\cite{verse}, and res2vec~\cite{res2vec}. We kept all common hyperparameters identical (number of samples, embedding dimension, walk length, etc.) and use otherwise the default values for the hyperparameters of the corresponding methods. We provide details about the implementation of the different algorithms in the next paragraph. 

Confirming the observations in~\cite{survey_embs}, the behavior of DeepWalk is very stable across different datasets and problems. SmoothDeepWalk consistently improves upon DeepWalk and yields results close to the best ones. In contrast, LINE, Verse, Walklets and NetMF yield the best results in some cases but clearly fail in others.   For link prediction SmoothDeepWalk is clearly the best approach. Even if SmoothDeepWalk doesn't yield the best result for any of the graphs for node classification, it is always close to the best result. 

When aggregating the results from Table~\ref{tab:app_comp_clf} and Table~\ref{tab:app_comp_lp} into Table~\ref{tab:aggregate} and Table~\ref{tab:app_agg}, we take for LINE the better result from LINE1 and LINE2, for NetMF -- from NetMF1 and NetMF2, and NetMF and for res2vec -- from res2vec(MF) and res2vec(SGD). Details about the different methods can be found in the next paragraph.


\begin{table*}[t!]
\centering
\caption{Precision@100 and Recall@100 scores for link prediction for SmoothDeepWalk (SDW), SmoothNode2Vec (SN2V) and other methods. OOM stands for out-of-memory. The best score is shown in {\bf bold}, more than one score is in bold if there is no statistically significant difference between the best scores.}
\label{tab:app_comp_lp}
\begin{tabular}{ll  cccccccc}
& & Cora & Citeseer & Pubmed & Deezer & Git & Flickr & Wiki & Blog\\
\toprule
& & \multicolumn{8}{c}{\large \texttt{Precision@100 scores}}\\
\toprule
	& DW & {\bf 1.55} & {\bf 0.86} & 1.92 & 3.9 & 6.36 & 0.49 & 7.88 & 18.11\\
	\midrule
	 & N2V & {\bf 1.54} & {\bf 0.89} & 2.15 & 3.8 & 5.87 & 0.67 & 8.32 & {\bf 19.58}\\
	\midrule
	 & SNS & {\bf 1.54} & {\bf 0.85} & $>24$h & $>24$h & $>24$h & $>24$h & 6.82 & $>24$h\\
	\midrule
	& NEPS & {\bf 1.56} & {\bf 0.87} & $>24$h & $>24$h & $>24$h & $>24$h & 8.12 & $>24$h\\
	\midrule 
	 & Walklets & 1.34 & 0.28 & 0.55 & 1.95 & 0.31 & 0.0 & 0.39 & 0.07\\
	\midrule
	 & Verse & 1.33 & 0.46 & 2.03 & 3.48 & 2.91 & 0.58 & 2.86 & 1.52\\
	\midrule
	 & NetMF1 & {\bf 1.5} & 0.42 & 1.03 & 1.68 & 0.02 & 0.03 & 2.13 & 1.9\\
	\midrule
	 & NetMF2 & {\bf 1.52} & 0.43 & 1.23 & 2.43 & 0.02 & 0.8 & 0.7 & 2.48\\
	\midrule
	 & Line1 & {\bf 1.46} & 0.32 & 0.62 & 2.86 & 0.82 & 1.34 & 0.94 & 1.42\\
	\midrule
	 & Line2 & 1.28 & 0.38 & 0.2 & 1.68 & 0.02 & 0.0 & 0.49 & 0.01\\
	\midrule
	 & res2vec(MF) & 1.44 & 0.38 & 0.0 & 0.8 & 0.02 & OOM & 1.27 & 1.04\\
	\midrule
	 & res2vec(SGD) & 1.42 & 0.66 & 0.9 & 1.29 & 0.72 & {\bf 2.4} & 0.29 & 0.34\\
	 \midrule
	 \midrule
	 & SDW & 1.53 & 0.86 & {\bf 3.01} & {\bf 4.94} & {\bf 9.54} & 1.13 & {\bf 9.11} & 17.67\\
	 \midrule
	 & SN2V & {\bf 1.54} & {\bf 0.89} & {\bf 3.03} & {\bf 4.72} & {\bf 9.58} &  1.29 & 8.82 & {\bf 19.39}\\
\bottomrule
\\
\toprule
& & \multicolumn{8}{c}{\large \texttt{Recall@100 scores}}\\
& & Cora & Citeseer & Pubmed & Deezer & Git & Flickr & Wiki & Blog\\
\toprule
	 & DW & {\bf 75.43} & 60.17 & 11.46 & 10.85 & 5.51 & 0.27 & 21.97 & 13.66\\
	\midrule
	& N2V & {\bf 75.2} & {\bf 62.5} & 12.96 & 10.59 & 5.08 & 0.37 & 22.86 & {\bf 14.78}\\
	\midrule
	 & SNS & {\bf 75.38} & 60.15 & $>24$h & $>24$h & $>24$h & $>24$h & 18.82 & $>24$h\\
	\midrule
	& NEPS & {\bf 75.47} & {\bf 60.6} & $>24$h & $>24$h & $>24$h & $>24$h & 21.02 & $>24$h\\
	\midrule 
	 & Walklets & 57.2 & 19.5 & 3.3 & 5.3 & 0.27 & 0.0 & 1.08 & 0.05\\
	\midrule
	 & Verse & 59.78 & 29.42 & 12.21 & 9.42 & 2.52 & 0.32 & 7.73 & 1.13\\
	\midrule
	 & NetMF1 & 66.19 & 26.25 & 6.09 & 4.56 & 0.01 & 0.02 & 5.69 & 1.41\\
	\midrule
	 & NetMF2 & 64.89 & 27.08 & 7.39 & 6.62 & 0.01 & 0.44 & 1.89 & 1.86\\
	\midrule
	 & Line1 & 61.27 & 21.17 & 3.76 & 7.84 & 0.7 & 0.75 & 2.58 & 1.06\\
	\midrule
	 & Line2 & 56.7 & 25.67 & 1.25 & 4.57 & 0.02 & 0.0 & 1.33 & 0.01\\
	\midrule
	 & res2vec(MF) & 61.7 & 24.0 & 0.0 & 2.19 & 0.02 & OOM & 3.4 & 0.77\\
	\midrule
	 & res2vec(SGD) & 60.7 & 41.17 & 5.22 & 3.51 & 0.63 & {\bf 1.59} & 0.79 & 0.25\\
	 \midrule
	 \midrule
	 & SDW & {\bf 74.98} & 60.67 & {\bf 17.81} & {\bf 13.76} & {\bf 8.27} & 0.63 & {\bf 25.08} & 13.34\\
	\midrule
	 & SN2V & {\bf 75.6} & {\bf 62.5} & {\bf 18.03} &  13.12 & {\bf 8.3} & 0.72 & 24.23 & {\bf 14.63}\\
\bottomrule
\end{tabular}
\end{table*}

\begin{table*}[t!]
\centering
\caption{Macro-F1 and Micro-F1 scores for node classification for SmoothDeepWalk (SDW), SmoothNode2Vec (SN2V) and other methods. OOM stands for out-of-memory. The best score is shown in {\bf bold}, more than one score is in bold if there is no statistically significant difference between the best scores.}
\label{tab:app_comp_clf}
\begin{tabular}{ll  cccccccc}
& & Cora & Citeseer & Pubmed & Deezer & Git & Flickr & Wiki & Blog\\
\toprule
& & \multicolumn{8}{c}{\large \texttt{Macro-F1 scores}}\\
\toprule
	& DW & 72.91 & 44.29 & 73.34 & 51.63 & 78.82 & 24.37 & 6.02 & 14.94\\
	\midrule
	 & N2V & 73.09 & 47.37 & 76.83 & 51.47 & 78.91 & 24.22 & 5.52 & 15.68\\
	\midrule
	 & SNS & 72.59 & 44.11 & $>24$h & $>24$h & $>24$h & $>24$h & 5.15 & $>24$h\\
	\midrule
	& NEPS & {\bf 73.79} &  45.17 & $>24$h & $>24$h & $>24$h & $>24$h & 5.32 & $>24$h\\
	\midrule 
	 & Walklets & 70.31 & 44.84 & 76.63 & {\bf 52.19} & {\bf 79.96} & 24.5 & 5.81 & 12.07\\
	\midrule
	 & Verse & 53.87 & 31.18 & 54.83 & 50.18 & 76.84 & {\bf 25.3} & 4.8 & 14.66\\
	\midrule
	 & NetMF1 & 66.99 & 43.84 & 73.16 & 48.43 & 74.07 & 20.31 & 4.97 & {\bf 20.15}\\
	\midrule
	 & NetMF2 & 69.09 & 45.25 & {\bf 77.32} & 49.24 & 75.98 & 22.05 & 3.91 & 18.12\\
	\midrule
	 & Line1 & 60.47 & 40.29 & 74.25 & 51.71 & 77.41 & 23.32 & 0.88 & 14.26\\
	\midrule
	 & Line2 & 59.05 & 34.04 & 73.16 & 51.33 & 79.0 & 23.34 & {\bf 7.52} & 14.11\\
	\midrule
	 & res2vec(MF) & {\bf 74.02} & 45.06 & 75.96 & 51.69 & 76.51 & OOM & 1.95 & 7.3\\
	\midrule
	 & res2vec(SGD) & 72.56 & 47.47 & 72.63 & 50.06 & 71.26 & 15.88 & 2.22 & 0.53\\
	 \midrule
	 \midrule
	 & SDW & 73.23 & 46.66 & 76.76 & 51.58 & 79.3 & 24.78 & 5.94 & 15.74\\
	 \midrule
	 & SN2V & 73.46 & {\bf 47.9} & {\bf 77.27} & 51.52 & 79.47 & 24.82 & 5.58 & 16.35\\
\bottomrule
\\
\toprule
& & \multicolumn{8}{c}{\large \texttt{Micro-F1 scores}}\\
& & Cora & Citeseer & Pubmed & Deezer & Git & Flickr & Wiki & Blog\\
\toprule
	 & DW & 74.38 & 48.2 & 75.02 & 54.22 & 84.82 & 51.7 & 30.86 & 23.93\\
	\midrule
	 & N2V & 74.48 & 51.05 & 78.3 & 54.07 & 84.87 & 51.44 & 29.74 & 24.54\\
	\midrule
	& SNS & 74.05 & 48.01 & 75.6 & $>24$h & $>24$h & $>24$h & 29.3 & $>24$h \\
	\midrule
	& NEPS &  74.8 &  50.25 & $>24$h & $>24$h & $>24$h & $>24$h & 32.67 & $>24$h\\
	\midrule 
	 & Walklets & 71.94 & 48.09 & 78.15 & 53.9 & {\bf 85.54} & {\bf 52.6} &  37.01 & 25.2\\
	\midrule
	 & Verse & 56.89 & 33.96 & 58.62 & 53.34 & 83.75 & 52.49 & 32.98 & 27.61\\
	\midrule
	 & NetMF1 & 68.51 & 48.0 & 76.02 & {\bf 55.29} & 82.94 & 50.9 & 30.57 & {\bf 33.52}\\
	\midrule
	 & NetMF2 & 70.5 & 49.23 & {\bf 78.91} & 55.02 & 83.86 & 51.54 & 27.1 & 29.03\\
	\midrule
	 & Line1 & 62.31 & 44.03 & 75.96 & 54.65 & 84.42 & 51.25 & 18.55 & 30.18\\
	\midrule
	 & Line2 & 60.91 & 37.27 & 74.85 & 54.34 & 84.83 & 51.96 & {\bf 39.4} & 25.57\\
	\midrule
	 & res2vec(MF) & {\bf 75.47} & 49.59 & 77.63 & 54.66 & 84.24 & OOM & 30.13 & 17.5\\
	\midrule
	 & res2vec(SGD) & 74.24 & {\bf 51.68} & 74.42 & 53.1 & 80.69 & 44.71 & 26.51 & 1.33\\
	 \midrule
	 \midrule
	  & SDW & 74.58 & 50.14 & 78.27 & 54.19 & 85.14 & 52.18 & 30.38 & 24.68\\
	  \midrule
	 & SN2V & 74.85 & 51.14 & {\bf 78.71} & 54.36 & {\bf 85.32} & 52.0 & 29.23 & 25.24\\
	\midrule
\bottomrule
\end{tabular}
\end{table*}

\begin{table*}
\caption{Aggregated recall@100 scores for link prediction (left) and micro-F1 scores for node classification (right) for SmoothDeepWalk, SmoothNode2Vec and other methods.}
\label{tab:app_agg}
\centering
\begin{minipage}{.5\linewidth}
    \centering
    \begin{tabular}{ lcc }
      \toprule
      method & mean  &  min \\
      \toprule
      	 SN2V & 0.924 & 0.538 \\
      	\midrule
	 SDW & 0.897 & 0.471 \\
	\midrule
	 N2V & 0.779 & 0.279 \\
	\midrule
	 DW & 0.75 & 0.204 \\
	 \midrule
	 Verse & 0.455 & 0.078 \\
	 \midrule
	 res2vec & 0.43 & 0.053 \\
	\midrule
	 NetMF & 0.376 & 0.002 \\
	 \midrule
	 Line & 0.365 & 0.073 \\
	\midrule
	 Walklets & 0.223 & 0.01 \\
      \bottomrule
    \end{tabular}
  \end{minipage}%
  \begin{minipage}{.5\linewidth}
    \centering
      \begin{tabular}{ lcc }
      \toprule
       method & mean  &  min\\
       \toprule
       NetMF & 0.949 & 0.776 \\
	\midrule
	Walklets & 0.938 & 0.752 \\
	\midrule
	Line & 0.935 & 0.813 \\
	\midrule
       SN2V & 0.926 & 0.742 \\
	\midrule
	N2V & 0.921 & 0.732 \\
	\midrule
	SDW & 0.92 & 0.725 \\
	\midrule
	DW & 0.911 & 0.714 \\
	\midrule
	Verse & 0.841 & 0.644 \\
	\midrule
	res2vec & 0.882 & 0.522 \\
		\bottomrule
    \end{tabular}
  \end{minipage}
\end{table*}

\paragraph{Implementation details}

For the other embedding approaches evaluated in the paper we use the following:
\begin{itemize}
\item 
We copied the implementation of {\bf smooth negative sampling (SNS)}~\cite{neg_smooth} provided by the authors~\url{https://github.com/THUDM/MCNS} into our framework. The computation of several inner products between embedding vectors results in an extremely slow processing of the corpus. Even after running the code on a more powerful machine with a 32 core CPU and an NVIDIA 4080 GPU, training on the larger graphs is still prohibitively slow. (In their evaluation, Yang et al. use a server with 8 state-of-the-art GPU powered machines.)  Hence, we only present results for the three smaller graphs Cora, Citeseer and Wikipedia. The results in Table~\ref{tab:app_neg_smooth} show the increase in running time compared to the standard DeepWalk algorithm. (For the larger graphs this is the estimated running time returned by TensorFlow.) As we see, the running time increase for SmoothDeepWalk for the larger Git and Flickr is rather modest while it is orders of magnitude for the SNS algorithm. Note that the gap would increase for larger embedding sizes $d$ as the sampling procedure in~\cite{neg_smooth} depends on $d$. 
\item node2vec. We used the implementation available at \url{https://github.com/aditya-grover/node2vec} to generate a corpus and then apply smooth pair sampling to it.
\item NEPS. We use the public implementation provided by the authors: \url{https://github.com/Andrewsama/NEPS-master}.
\item Verse. The algorithm uses random walk with restarts with probability $1-\alpha$, similarly to PageRank. We use the implementation provided by the authors \url{https://github.com/xgfs/verse}, using for probability for walk restart the default value of $\alpha=0.85$. 
\item LINE. The method learns to preserve the first and second proximities of each node, hence the versions Line1 and Line2. We use the code \url{https://github.com/tangjianpku/LINE}. For the total number of sampled node pairs we use the same number as the one generated from the random walk corpus in (Smooth)DeepWalk.
\item Walklets works by subsampling random walks. We use the public implementation provided By Karate Club~\cite{karateclub} \url{https://karateclub.readthedocs.io/en/latest/index.html}
\item NetMF factorizes (a sparse version of) the node similarity matrix discussed in Section~\ref{sec:why}. We also use the Karate Club implementation. We consider order 1 and 2 corresponding to different powers of the node similarity matrix.  (It is worth noting that for larger powers we obtain out-of-memory errors for the larger graphs.) 
\item   Residual2Vec corrects biases introduced by random walks by designing novel techniques for the generation of negative samples that reflect the graph structure. We use the public implementation from \url{https://github.com/skojaku/residual2vec}. We use the default setting, in particular we do not specify group membership for the nodes. We run experiments with two variants of the approach based on matrix factorization, denoted as res2vec(MF), and on stochastic gradient descent -- res2vec(SGD). For res2vec(SGD) we use the the stochastic block model method for negative sampling. 
\end{itemize}

\end{document}